
\documentclass{article}

\usepackage{microtype}
\usepackage{graphicx}
\usepackage{subfigure}
\usepackage{booktabs} 

\usepackage{hyperref}



\usepackage[accepted]{icml2025}

\usepackage{amsmath}
\usepackage{amssymb}
\usepackage{mathtools}
\usepackage{amsthm}
\usepackage{multirow}

\definecolor{MyColor1}{RGB}{0,120,255}
\definecolor{spo}{HTML}{EF3E4A}
\usepackage{tcolorbox}

\usepackage[capitalize,noabbrev]{cleveref}

\theoremstyle{plain}
\newtheorem{theorem}{Theorem}[section]
\newtheorem{proposition}[theorem]{Proposition}

\theoremstyle{definition}
\newtheorem{definition}[theorem]{Definition}

\theoremstyle{remark}

\usepackage[textsize=tiny]{todonotes}

\icmltitlerunning{Simple Policy Optimization}

\begin{document}

\twocolumn[
\icmltitle{Simple Policy Optimization}



\icmlsetsymbol{equal}{*}

\begin{icmlauthorlist}
	\icmlauthor{Zhengpeng Xie}{equal,1}
	\icmlauthor{Qiang Zhang}{equal,1,2}
	\icmlauthor{Fan Yang}{equal,3}
	\icmlauthor{Marco Hutter}{3}
	\icmlauthor{Renjing Xu}{1}
\end{icmlauthorlist}

\icmlaffiliation{1}{The Hong Kong University of Science and Technology (Guangzhou)}
\icmlaffiliation{2}{Beijing Innovation Center of Humanoid Robotics}
\icmlaffiliation{3}{ETH Zurich}

\icmlcorrespondingauthor{Zhengpeng Xie}{zhengpengxie@hkust-gz.edu.cn}

\icmlkeywords{Machine Learning, ICML}

\vskip 0.3in
]



\printAffiliationsAndNotice{\icmlEqualContribution} 

\graphicspath{{figures/}}

\begin{abstract}
Model-free reinforcement learning algorithms have seen remarkable progress, but key challenges remain. Trust Region Policy Optimization (TRPO) is known for ensuring monotonic policy improvement through conservative updates within a trust region, backed by strong theoretical guarantees. However, its reliance on complex second-order optimization limits its practical efficiency. Proximal Policy Optimization (PPO) addresses this by simplifying TRPO's approach using ratio clipping, improving efficiency but sacrificing some theoretical robustness. This raises a natural question: Can we combine the strengths of both methods? In this paper, we introduce \textit{Simple Policy Optimization} (SPO), a novel unconstrained first-order algorithm. By slightly modifying the policy loss used in PPO, SPO can achieve the best of both worlds. Our new objective improves upon ratio clipping, offering stronger theoretical properties and better constraining the probability ratio within the trust region. Empirical results demonstrate that SPO outperforms PPO with a simple implementation, particularly for training large, complex network architectures end-to-end.

\textbf{Code is available at} \href{https://github.com/MyRepositories-hub/Simple-Policy-Optimization}{\textit{Simple-Policy-Optimization}}.
\end{abstract}

\begin{figure}[!t]
	\centering
	\includegraphics[scale=0.5]{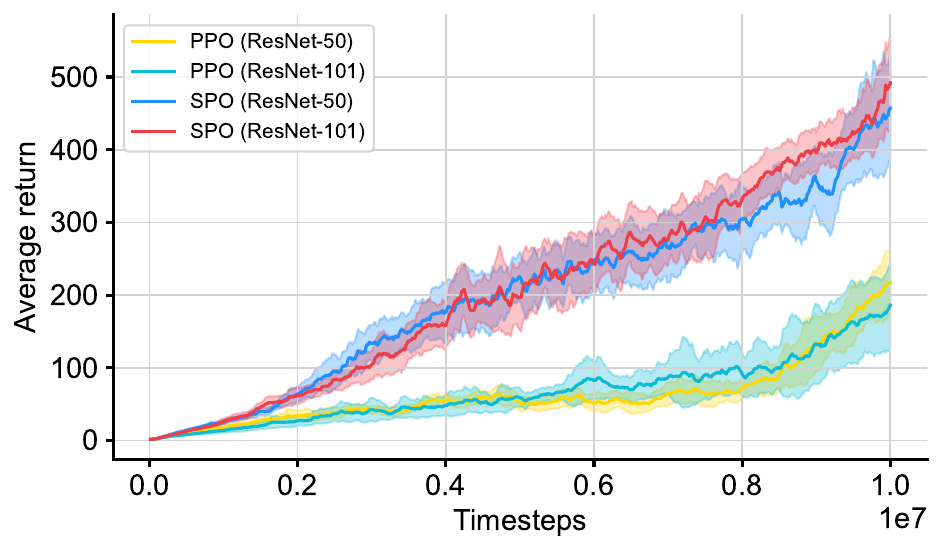}
	\caption{Training performance in the \textit{Breakout} environment. SPO is a novel model-free algorithm capable of end-to-end training for extremely deep neural network architectures, positioning itself as a promising alternative to the well-known PPO algorithm.}
\end{figure}

\section{Introduction}
Deep Reinforcement Learning (DRL) has achieved great success in recent years, notably in games \citep{mnih2015human, silver2016mastering, silver2017mastering, silver2018general, vinyals2019grandmaster}, foundation model fine-tuning \citep{ouyang2022training, black2023training}, and robotic control \citep{makoviychuk2021isaac, rudin2022learning}. Policy gradient (PG) methods \citep{sutton2018reinforcement, lehmann2024definitive}, as a major paradigm in RL, have been widely adopted by the academic community. One main practical challenge of PG methods is to reduce the variance of the gradients while keeping the bias low \citep{sutton2000comparing, schulman2015high}. In this context, a widely used technique is to add a baseline when sampling an estimate of the action-value function \citep{greensmith2004variance}. Another challenge of PG methods is to estimate the proper step size for the policy update \citep{kakade2002approximately, schulman2015trust}. Given that the training data strongly depends on the current policy, a large step size may result in a collapse of policy performance, whereas a small one may impair the sample efficiency of the algorithm.

\begin{figure*}[!t]
	\centering
	\includegraphics[scale=0.7]{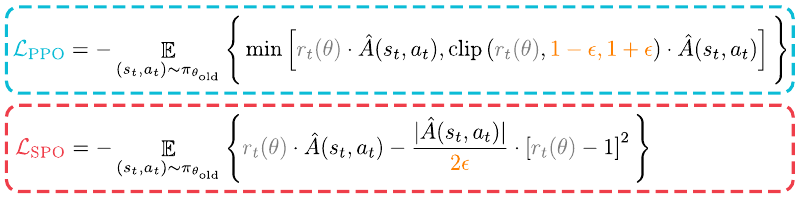}\enspace\includegraphics[scale=0.33]{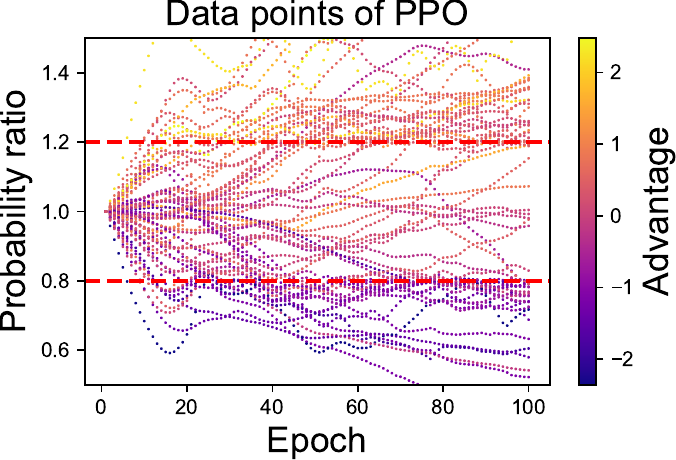}\enspace\includegraphics[scale=0.33]{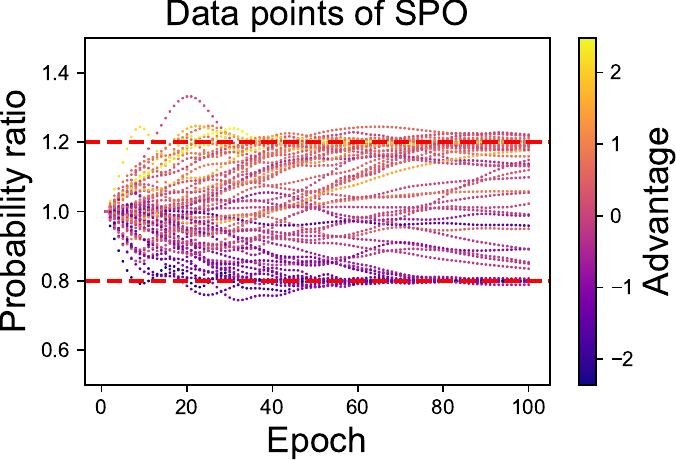}
	\vspace{-0.26cm}  
	\caption{(Left) The only difference between SPO and PPO is the policy loss, where ${\color{gray}r_t(\theta)}=\pi_{\theta}(a_t|s_t)/\pi_{\theta_{\mathrm{old}}}(a_t|s_t)$ and ${\color{orange}\epsilon}$ is the probability ratio hyperparameter, making it simple and straightforward to implement SPO based on high-quality PPO implementations. (Right) The optimization behavior of PPO and SPO is visualized, where each scatter point represents the probability ratio of a single data point for a specific training epoch, with its color corresponding to its advantage, and the red line representing the probability ratio bound.}\label{optimization behavior}
\end{figure*}

To address these challenges, \citet{schulman2015trust} proved that optimizing a certain surrogate objective guarantees policy improvement with non-trivial step sizes. Subsequently, the TRPO algorithm was derived through a series of approximations, which impose a trust region constraint during the policy iterations, leading to monotonic policy improvement in theory. However, given the complexity of second-order optimization, TRPO is highly inefficient and can be hard to extend to large-scale RL environments. Proximal Policy Optimization (PPO) \citep{schulman2017proximal} is designed to enforce comparable constraints on the difference between successive policies during the training process, while only using first-order optimization. By clipping the current data that exceeds the probability ratio limit to a constant, PPO attempts to remove the high incentive for pushing the current policy away from the old one. It has been demonstrated that PPO can be effectively extended to large-scale complex control tasks \citep{ye2020mastering, makoviychuk2021isaac}.

Despite its success, the optimization behavior of PPO remains insufficiently understood. Although PPO aims to constrain the probability ratio deviations between successive policies, it often fails to keep these ratios within bounds \citep{ilyas2018deep, engstrom2020implementation, wang2020truly}. In some tasks, the ratios can even escalate to values as high as $40$ \citep{wang2020truly}. Furthermore, studies have revealed that PPO's performance is highly dependent on ``code-level optimizations'' \citep{andrychowicz2021matters, huang202237}. The implementation of PPO includes numerous code-level details that critically influence its effectiveness \citep{engstrom2020implementation, huang2022cleanrl}.

In this paper, we propose a new model-free RL algorithm named Simple Policy Optimization (SPO) designed to more effectively bound probability ratios through a novel objective function. The key differences in optimization behavior between PPO and SPO are illustrated in Figure \ref{optimization behavior}. Our main contributions are summarized as follows:
\begin{itemize}
	\item We theoretically prove that optimizing a tighter performance lower bound using Total Variation (TV) divergence constrained space results in more consistent policy improvement.
	\item To overcome PPO's limitation in constraining probability ratios, we propose a new objective function, leading to the development of the proposed SPO algorithm.
	\item Experiments benchmark various policy gradient algorithms across different environments, showing that SPO can achieve competitive performance with a simple implementation, improved sample efficiency, and easier training of deeper policy networks.
\end{itemize}

\section{Related Work}
Since TRPO \citep{schulman2015trust} theoretically demonstrated monotonic policy improvement, numerous studies have explored how to enforce trust region constraints efficiently, which are essential for ensuring robust policy improvement. For instance, the widely-used PPO algorithm \citep{schulman2017proximal} was the first to introduce the heuristic clipping technique, effectively avoiding the computationally expensive second-order optimization. This heuristic clipping technique has been widely used in various reinforcement learning algorithms \citep{queeney2021generalized, zhuang2023behavior, gan2024reflective}.

However, empirical evidence from a wide range of studies demonstrates that ratio clipping fails to enforce trust region constraints effectively \citep{wang2020truly}. To prevent aggressive policy updates, previous works have focused on designing adaptive learning rates based on TV divergence or KL divergence \citep{heess2017emergence, queeney2021generalized, rudin2022learning}, which have been shown to effectively enhance the stability of PPO. On the other hand, code-level optimizations are crucial for the robust performance of PPO \citep{engstrom2020implementation}. High-quality implementations of PPO involve numerous code details \citep{huang202237, huang2022cleanrl}, making it challenging to accurately assess the core factors that truly affect the algorithm's performance.

In this work, we argue that heuristic clipping technique cannot enforce trust region constraints (see Figure \ref{optimization behavior}). During PPO's iterations, ratio clipping zeros the gradients of certain data points, which can lead to a lack of corrective gradients to prevent the policy from escaping the trust region, thus undermining the monotonic improvement guarantee. As a result, PPO requires additional code-level tuning, such as adaptive learning rates or early stopping strategies, to artificially prevent performance collapse. We reveal this inherent flaw of ratio clipping and propose promising alternatives.

\section{Background}
\subsection{Reinforcement Learning}
Online reinforcement learning is a mathematical framework for sequential decision-making, which is generally defined by the Markov Decision Process (MDP) $\mathcal{M}=(\mathcal{S},\mathcal{A},r,\mathcal{P},\rho_0,\gamma)$, where $\mathcal{S}$ and $\mathcal{A}$ represent the state space and action space, $r:\mathcal{S}\times\mathcal{A}\mapsto\mathbb{R}$ is the reward function, $\mathcal{P}:\mathcal{S}\times\mathcal{A}\times\mathcal{S}\mapsto[0,1]$ is the probability distribution of the state transition function, $\rho_0:\mathcal{S}\mapsto[0,1]$ is the initial state distribution, while $\gamma\in(0,1)$ is the discount factor.

Suppose that an agent interacts with the environment following policy $\pi$, i.e., $a_t\sim\pi(\cdot|s_t)$ and obtains a trajectory $\tau=\left(s_0,a_0,r_0,\dots,s_{t},a_{t},r_{t},\dots\right)$, where $r_t=r(s_t,a_t)$. The goal of RL is to learn a policy that maximizes the objective $\eta(\pi)=\mathbb{E}_{\tau\sim\pi}\left[\sum_{t=0}^{\infty}\gamma^tr_t\right]$, where the notation $\mathbb{E}_{\tau\sim\pi}$ represents the expected return of the trajectory $\tau$ generated by the agent following policy $\pi$, i.e., $s_0\sim\rho_0(\cdot),a_t\sim\pi(\cdot|s_t),r_t=r(s_t,a_t),s_{t+1}\sim\mathcal{P}(\cdot|s_t,a_t)$. The action-value function and value function are defined as
\begin{equation}
	\begin{split}
	Q_{\pi}(s_t,a_t)&=\mathbb{E}_{s_{t+1},a_{t+1},\dots}\left[\sum_{k=0}^{\infty}\gamma^kr(s_{t+k},a_{t+k})\right],\\V_{\pi}(s_t)&=\mathbb{E}_{a_t\sim\pi(\cdot|s_t)}\left[Q_{\pi}(s_t,a_t)\right].\\
	\end{split}
\end{equation}
Given $Q_{\pi}$ and $V_{\pi}$, the advantage function can be expressed as $A_{\pi}(s_t,a_t)=Q_{\pi}(s_t,a_t)-V_{\pi}(s_t)$.

\subsection{Trust Region Policy Optimization}
Classic policy gradient methods cannot reuse data and are highly sensitive to the hyperparameters. To address these issues, in Trust Region Policy Optimization (TRPO), \citet{schulman2015trust} derived a lower bound for policy improvement. Before that, \citet{kakade2002approximately} first proved the following policy performance difference theorem.
\begin{theorem}\label{kakade2002approximately}
	\citep{kakade2002approximately} Let $\mathbb{P}(s_t=s|\pi)$ represents the probability of the $t$-th state equals to $s$ in trajectories generated by the agent following policy $\pi$, and $\rho_{\pi}(s)=(1-\gamma)\sum_{t=0}^{\infty}\gamma^t\mathbb{P}(s_t=s|\pi)$ represents the normalized discounted visitation distribution. Given any two policies, $\pi$ and $\tilde{\pi}$, their performance difference can be measured by
	\begin{equation}\label{performance difference}
		\begin{split}
		\eta(\tilde{\pi})-\eta(\pi)&=\frac{1}{1-\gamma}\mathbb{E}_{s\sim{\color{red}\rho_{\tilde{\pi}}(\cdot)},a\sim{\color{red}\tilde{\pi}(\cdot|s)}}\left[A_{\pi}(s,a)\right]\\&=\frac{1}{1-\gamma}\sum_{s}\rho_{\tilde{\pi}}(s)\sum_{a}\tilde{\pi}(a|s)\cdot A_{\pi}(s,a),\\
		\end{split}
	\end{equation}
	where $\eta(\pi)=\mathbb{E}_{\tau\sim\pi}\left[\sum_{t=0}^{\infty}\gamma^tr_t\right]$.
\end{theorem}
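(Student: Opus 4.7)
The plan is to prove this via the classical telescoping identity that rewrites each advantage in terms of a Bellman residual and then collapses the resulting sum. Concretely, I would consider the quantity $\mathbb{E}_{\tau \sim \tilde{\pi}}\!\left[\sum_{t=0}^{\infty} \gamma^t A_{\pi}(s_t, a_t)\right]$, where the trajectory $\tau = (s_0, a_0, s_1, a_1, \dots)$ is generated by $\tilde{\pi}$ in the MDP $\mathcal{M}$, and show that it equals both sides of the claimed identity (up to the $1/(1-\gamma)$ normalization).

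First I would expand $A_{\pi}(s_t, a_t) = Q_{\pi}(s_t, a_t) - V_{\pi}(s_t)$ and then use the Bellman relation $Q_{\pi}(s_t, a_t) = \mathbb{E}_{s_{t+1} \sim \mathcal{P}(\cdot \mid s_t, a_t)}[r(s_t, a_t) + \gamma V_{\pi}(s_{t+1})]$. Substituting inside the trajectory expectation gives
\begin{equation*}
\mathbb{E}_{\tau \sim \tilde{\pi}}\!\left[\sum_{t=0}^{\infty} \gamma^t \bigl(r(s_t,a_t) + \gamma V_{\pi}(s_{t+1}) - V_{\pi}(s_t)\bigr)\right].
\end{equation*}
The first term is exactly $\eta(\tilde{\pi})$. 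The remaining two terms telescope: the sum $\sum_{t=0}^{\infty} \gamma^{t+1} V_{\pi}(s_{t+1}) - \sum_{t=0}^{\infty} \gamma^{t} V_{\pi}(s_{t})$ collapses to $-V_{\pi}(s_0)$, whose expectation under $s_0 \sim \rho_0$ is $-\eta(\pi)$. This yields $\mathbb{E}_{\tau \sim \tilde{\pi}}\!\left[\sum_{t=0}^{\infty} \gamma^t A_{\pi}(s_t, a_t)\right] = \eta(\tilde{\pi}) - \eta(\pi)$.

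Next I would rewrite the trajectory expectation in terms of the state visitation distribution. By linearity, exchanging sum and expectation gives $\sum_{t=0}^{\infty} \gamma^t \sum_{s} \mathbb{P}(s_t = s \mid \tilde{\pi}) \sum_{a} \tilde{\pi}(a \mid s) A_{\pi}(s, a)$. Swapping the order of the $t$- and $s$-sums and invoking the definition $\rho_{\tilde{\pi}}(s) = (1-\gamma)\sum_{t=0}^{\infty} \gamma^t \mathbb{P}(s_t = s \mid \tilde{\pi})$ identifies this with $\frac{1}{1-\gamma} \sum_s \rho_{\tilde{\pi}}(s) \sum_a \tilde{\pi}(a \mid s) A_{\pi}(s, a)$, which is precisely the right-hand side of the claim.

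The main obstacle is justifying the interchange of the infinite sum with expectations and with the summation over states, and verifying that the telescoping boundary term at infinity vanishes. This is routine under the assumption of bounded rewards and $\gamma \in (0,1)$: the value function is uniformly bounded by $\|r\|_\infty/(1-\gamma)$, so $\gamma^{t+1} \mathbb{E}[V_{\pi}(s_{t+1})] \to 0$, and absolute convergence of $\sum_t \gamma^t |A_\pi|$ legitimizes Fubini. Once these technicalities are handled, the two expressions for $\mathbb{E}_{\tau \sim \tilde{\pi}}[\sum_t \gamma^t A_\pi(s_t,a_t)]$ derived above coincide, giving the stated identity.
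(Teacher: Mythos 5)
Your proof is correct: the telescoping of $\gamma^{t+1}V_{\pi}(s_{t+1})-\gamma^{t}V_{\pi}(s_t)$ under the Bellman expansion of $Q_{\pi}$, the identification of $\mathbb{E}_{s_0\sim\rho_0}[V_{\pi}(s_0)]$ with $\eta(\pi)$, and the regrouping by states via $\rho_{\tilde{\pi}}$ is exactly the classical argument of \citet{kakade2002approximately} (also Appendix A of \citet{schulman2015trust}), which is the proof the paper implicitly relies on by citation without reproducing it. Your attention to the vanishing boundary term and the Fubini interchange under bounded rewards is appropriate and handled correctly.
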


The key insight is that the new policy $\tilde{\pi}$ will improve (or at least remain constant) as long as it has a nonnegative expected advantage at every state $s$. Then, the following performance improvement lower bound is given:
\begin{theorem}\label{schulman2015trust}
	\citep{achiam2017constrained} Given any two policies, $\pi$ and $\tilde{\pi}$, the following bound holds:
	\begin{equation}\label{tv lower bound}
		\begin{split}
			\eta(\tilde{\pi})-\eta(\pi)\geq&\frac{1}{1-\gamma}\mathbb{E}_{s\sim{\color{MyColor1}\rho_{\pi}(\cdot)},a\sim{\color{red}\tilde{\pi}(\cdot|s)}}\left[A_{\pi}(s,a)\right]\\
			&-\frac{2\xi\gamma}{(1-\gamma)^2}\mathbb{E}_{s\sim{\color{MyColor1}\rho_{\pi}(\cdot)}}\left[D_{\mathrm{TV}}(\pi\Vert\tilde{\pi})[s]\right],\\
		\end{split}
	\end{equation}
	where $\xi=\max_{s}\left|\mathbb{E}_{a\sim\tilde{\pi}(\cdot|s)}\left[A_{\pi}(s,a)\right]\right|$, $D_{\mathrm{TV}}$ is the Total Variation (TV) divergence.
\end{theorem}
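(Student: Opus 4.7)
The plan is to start from the exact performance-difference identity of Theorem \ref{kakade2002approximately} and to replace the intractable new-policy state distribution $\rho_{\tilde\pi}$ by $\rho_{\pi}$, paying for the substitution with a correction term that can be controlled by the policy TV divergence. Setting $\bar{A}(s):=\mathbb{E}_{a\sim\tilde\pi(\cdot|s)}[A_{\pi}(s,a)]$ and invoking \eqref{performance difference}, I would first decompose
\begin{equation*}
\eta(\tilde\pi)-\eta(\pi)=\frac{1}{1-\gamma}\sum_{s}\rho_{\pi}(s)\bar{A}(s)+\frac{1}{1-\gamma}\sum_{s}\bigl(\rho_{\tilde\pi}(s)-\rho_{\pi}(s)\bigr)\bar{A}(s).
\end{equation*}
The first summand is already the surrogate appearing on the right of \eqref{tv lower bound}, so the remaining work is to lower-bound the second.

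For that correction term I would use the defining bound $|\bar{A}(s)|\leq\xi$ together with Hölder's inequality to obtain
\begin{equation*}
\left|\tfrac{1}{1-\gamma}\sum_{s}(\rho_{\tilde\pi}(s)-\rho_{\pi}(s))\bar{A}(s)\right|\leq\frac{\xi}{1-\gamma}\Vert\rho_{\tilde\pi}-\rho_{\pi}\Vert_{1}=\frac{2\xi}{1-\gamma}D_{\mathrm{TV}}(\rho_{\tilde\pi}\Vert\rho_{\pi}).
\end{equation*}
The whole theorem therefore reduces to the state-visitation inequality
\begin{equation*}
D_{\mathrm{TV}}(\rho_{\tilde\pi}\Vert\rho_{\pi})\leq\frac{\gamma}{1-\gamma}\,\mathbb{E}_{s\sim\rho_{\pi}}\bigl[D_{\mathrm{TV}}(\pi\Vert\tilde\pi)[s]\bigr],
\end{equation*}
since plugging it into the previous display yields exactly the $2\xi\gamma/(1-\gamma)^{2}$ prefactor demanded in \eqref{tv lower bound}.

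This last inequality on the state-visitation distributions is the main technical obstacle. I would prove it by viewing both $\rho_{\pi}$ and $\rho_{\tilde\pi}$ as unique fixed points of the discounted forward operator $T_{\pi}\mu(s'):=(1-\gamma)\rho_{0}(s')+\gamma\sum_{s,a}\mu(s)\pi(a|s)\mathcal{P}(s'|s,a)$, and similarly for $T_{\tilde\pi}$. Subtracting the two fixed-point equations makes the inhomogeneous term $(1-\gamma)\rho_{0}$ cancel and yields $\rho_{\tilde\pi}-\rho_{\pi}=\gamma P_{\tilde\pi}(\rho_{\tilde\pi}-\rho_{\pi})+(T_{\tilde\pi}-T_{\pi})\rho_{\pi}$, where $P_{\tilde\pi}$ is the state-to-state Markov kernel induced by $\tilde\pi$. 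Iterating gives the Neumann-style expansion $\rho_{\tilde\pi}-\rho_{\pi}=\sum_{k\geq 0}(\gamma P_{\tilde\pi})^{k}(T_{\tilde\pi}-T_{\pi})\rho_{\pi}$. A direct calculation, using that $P_{\tilde\pi}$ is an $\ell_{1}$ non-expansion and that $\sum_{a}|\tilde\pi(a|s)-\pi(a|s)|=2D_{\mathrm{TV}}(\pi\Vert\tilde\pi)[s]$, shows $\Vert(T_{\tilde\pi}-T_{\pi})\rho_{\pi}\Vert_{1}\leq 2\gamma\,\mathbb{E}_{s\sim\rho_{\pi}}[D_{\mathrm{TV}}(\pi\Vert\tilde\pi)[s]]$. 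Summing the geometric series $\sum_{k\geq 0}\gamma^{k}=1/(1-\gamma)$ and converting $\ell_{1}$ to TV (factor $1/2$) delivers the desired state-distribution bound, which combined with the earlier decomposition completes the proof of \eqref{tv lower bound}.
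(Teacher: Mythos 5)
Your proof is correct. The paper does not prove this theorem itself---it simply cites \citet{achiam2017constrained}---and your argument is essentially the one given there: decompose the performance difference from Theorem~\ref{kakade2002approximately} into the surrogate under $\rho_{\pi}$ plus a correction weighted by $\rho_{\tilde\pi}-\rho_{\pi}$, bound the correction via H\"older and $|\bar A(s)|\le\xi$, and control $\Vert\rho_{\tilde\pi}-\rho_{\pi}\Vert_1$ through the fixed-point equation for the discounted visitation distribution, the $\ell_1$ non-expansiveness of the induced state kernel, and the geometric series, yielding the $\gamma/(1-\gamma)$ factor. All constants check out, so the bound $2\xi\gamma/(1-\gamma)^2$ is recovered exactly.
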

Using importance sampling on action $a\sim{\color{red}\tilde{\pi}(\cdot|s)}$ and according to the Pinsker’s inequality, we have
\begin{equation}\label{kl lower bound}
	\begin{split}
		\eta(\tilde{\pi})-\eta(\pi)\geq&\frac{1}{1-\gamma}\mathbb{E}_{s\sim{\color{MyColor1}\rho_{\pi}(\cdot)},a\sim{\color{MyColor1}\pi(\cdot|s)}}\left[\frac{\tilde{\pi}(a|s)}{\pi(a|s)}\cdot A_{\pi}(s,a)\right]\\
		&-\frac{2\xi\gamma}{(1-\gamma)^2}\mathbb{E}_{s\sim{\color{MyColor1}\rho_{\pi}(\cdot)}}\left[\sqrt{\frac{1}{2}D_{\mathrm{KL}}(\pi\Vert\tilde{\pi})[s]}\right].\\
	\end{split}
\end{equation}
At this point, the subscripts of the expectation in (\ref{performance difference}) are replaced from $s\sim{\color{red}\rho_{\tilde{\pi}}(\cdot)}$ and $a\sim{\color{red}\tilde{\pi}(\cdot|s)}$ to $s\sim{\color{MyColor1}\rho_{\pi}(\cdot)}$ and $a\sim{\color{MyColor1}\pi(\cdot|s)}$, which means that we can now reuse the current data. In TRPO, the lower bound in (\ref{kl lower bound}) is \textit{indirectly} optimized by solving the following optimization problem:
\begin{equation}\label{trpo}
	\begin{split}
		\max_{\theta}\enspace &\mathbb{E}_{(s_t,a_t)\sim\pi_{\theta_{\mathrm{old}}}}\left[\frac{\pi_{\theta}(a_t|s_t)}{\pi_{\theta_{\mathrm{old}}}(a_t|s_t)}\cdot \hat{A}(s_t,a_t)\right], \\
		\text{s.t.}\hspace{2.5pt}\enspace &\mathbb{E}\left[D_{\mathrm{KL}}(\pi_{\theta_{\mathrm{old}}}\Vert\pi_{\theta})\right]\leq\delta.      \\
	\end{split}
\end{equation}
This problem includes a constraint where $\delta$ is a hyperparameter that limits the KL divergence between successive policies, with $\hat{A}(s_t,a_t)$ being the estimate of the advantage function, and the objective is called ``surrogate objective''.

\subsection{Proximal Policy Optimization}\label{Proximal Policy Optimization}
Due to the necessity of solving a constrained optimization problem (\ref{trpo}) in each update, TRPO is highly inefficient and can be challenging to apply to large-scale reinforcement learning tasks. 

\citet{schulman2017proximal} proposed a new objective called ``clipped surrogate objective'', in which the algorithm is named Proximal Policy Optimization (PPO). PPO retains similar constraints of TRPO but is much easier to implement and involves only first-order optimization. 

The ``clipped surrogate objective'', also called PPO-Clip, adopts a ratio clipping function. Denote $\hat{A}_t=\hat{A}(s_t,a_t)$, the objective of PPO-Clip can be expressed as
\begin{equation}\label{ppo}
	J_{\mathrm{clip}}(\theta)=\mathbb{E}_{(s_t,a_t)\sim\pi_{\theta_{\mathrm{old}}}}\left\{\min\left[r_t(\theta)\cdot\hat{A}_t,\tilde{r}_t(\theta)\cdot\hat{A}_t\right]\right\},
\end{equation}
where
\begin{equation}
	r_t(\theta)=\frac{\pi_{\theta}(a_t|s_t)}{\pi_{\theta_{\mathrm{old}}}(a_t|s_t)},\enspace\tilde{r}_t(\theta)=\mathrm{clip}\left(r_t(\theta),1-\epsilon,1+\epsilon\right),
\end{equation}
with $\pi_{\theta_{\mathrm{old}}}$ and $\pi_{\theta}$ being the old policy and the current policy. The gradient of PPO-Clip, given the training data $(s_t, a_t)$, can be expressed as
\begin{equation}\label{gradient of ppo}
	\nabla_{\theta}J_{\mathrm{clip}}(\theta)=
	\begin{cases}
		\nabla_{\theta}r_t(\theta)\cdot\hat{A}_t, &\hat{A}_t>0,r_t(\theta)\leq1+\epsilon;\\
		\nabla_{\theta}r_t(\theta)\cdot\hat{A}_t, &\hat{A}_t<0,r_t(\theta)\geq1-\epsilon;\\
		0, &\mathrm{otherwise}.\\
	\end{cases}
\end{equation}
In other words, PPO-Clip aims to remove the high incentive for pushing the current policy away from the old one. PPO-Clip has gained wide adoption in the academic community due to its simplicity and performance.

\section{Methodology}
PPO attempts to limit the differences between successive policies through ratio clipping. However, \citet{wang2020truly} proved the following theorem:
\begin{theorem}\label{KL is not bounded}
	\citep{wang2020truly} For discrete action space tasks where $\left|\mathcal{A}\right|\geq3$ or continuous action space tasks where the output of the policy $\pi_{\theta}$ follows a multivariate Gaussian distribution. Let $\Theta=\left\{\theta|1-\epsilon\leq r_t(\theta)\leq1+\epsilon\right\}$, we have $\sup_{\theta\in\Theta}D_{\mathrm{KL}}(\pi_{\theta_{\mathrm{old}}}\Vert\pi_{\theta})[s_t]=+\infty$ for both discrete and continuous action space tasks.
\end{theorem}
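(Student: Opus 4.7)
The plan is to exploit the fundamental mismatch between the ratio constraint, which only controls $\pi_\theta(a_t|s_t)/\pi_{\theta_{\mathrm{old}}}(a_t|s_t)$ at the single sampled action $a_t$, and the KL divergence $D_{\mathrm{KL}}(\pi_{\theta_{\mathrm{old}}}\Vert\pi_\theta)[s_t]$, which aggregates information across the entire action space. Because the constraint leaves the behaviour of $\pi_\theta$ off $a_t$ essentially free, I plan to exhibit, in each of the two cases, an explicit sequence $\{\theta_n\}\subset\Theta$ along which the KL at $s_t$ diverges to $+\infty$.

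\textbf{Discrete case.} I would assume a softmax (or otherwise universal) parametrization so that every point of the simplex on $\mathcal{A}$ is realizable. Then I choose $\theta_n$ with $\pi_{\theta_n}(a_t|s_t)=\pi_{\theta_{\mathrm{old}}}(a_t|s_t)$, which forces $r_t(\theta_n)=1$ and hence $\theta_n\in\Theta$ trivially; the only remaining condition on $\pi_{\theta_n}(\cdot|s_t)$ is that its off-$a_t$ mass sum to $1-\pi_{\theta_{\mathrm{old}}}(a_t|s_t)$. Fix any $a'\neq a_t$ with $\pi_{\theta_{\mathrm{old}}}(a'|s_t)>0$ and push almost all of that remaining mass onto some third action $a''\notin\{a_t,a'\}$ so that $\pi_{\theta_n}(a'|s_t)\to 0$. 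The single KL summand $\pi_{\theta_{\mathrm{old}}}(a'|s_t)\log[\pi_{\theta_{\mathrm{old}}}(a'|s_t)/\pi_{\theta_n}(a'|s_t)]$ then diverges to $+\infty$, while the other finitely many summands stay bounded. The hypothesis $|\mathcal{A}|\geq 3$ is precisely what supplies the third slot $a''$; with only two actions the ratio constraint already pins down both probabilities.

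\textbf{Gaussian case.} Write $\pi_{\theta_{\mathrm{old}}}(\cdot|s_t)=\mathcal{N}(\mu_{\mathrm{old}},\Sigma_{\mathrm{old}})$ and $\pi_\theta(\cdot|s_t)=\mathcal{N}(\mu_\theta,\Sigma_\theta)$ on $\mathbb{R}^d$. I would parametrize the candidate covariance as $\Sigma_\theta=c^2 I$ for a scalar $c>0$. After taking logarithms of the two Gaussian densities at $a_t$, the equation $r_t(\theta)=1$ reduces to the single scalar relation $\|a_t-\mu_\theta\|^2 = c^2\bigl[(a_t-\mu_{\mathrm{old}})^{\top}\Sigma_{\mathrm{old}}^{-1}(a_t-\mu_{\mathrm{old}}) + \log\det\Sigma_{\mathrm{old}} - 2d\log c\bigr]$, whose right-hand side is positive for all sufficiently small $c$ thanks to the $-2d\log c$ term, so $\mu_\theta$ can always be chosen to solve it. Substituting $(\mu_\theta, c^2 I)$ into the closed-form Gaussian KL formula, the $c^{-2}\,\mathrm{tr}(\Sigma_{\mathrm{old}})$ term dominates the $2d\log c$ and the mean-offset contributions as $c\to 0^+$, so $D_{\mathrm{KL}}(\pi_{\theta_{\mathrm{old}}}\Vert\pi_\theta)[s_t]\to+\infty$ along this path.

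\textbf{Main obstacle.} The step I expect to be trickiest is designing families that keep $r_t(\theta)$ inside $[1-\epsilon,1+\epsilon]$ \emph{exactly} (or to arbitrary tightness) while still leaving KL unbounded; the two constructions above accomplish this by trading one scalar constraint against many free parameters ($|\mathcal{A}|-1$ probabilities in the discrete case, the pair $(\mu_\theta,\Sigma_\theta)$ in the Gaussian case). A secondary subtlety is that both arguments implicitly assume the parametrization is rich enough to realize the chosen distributions, which holds for the standard softmax over finite actions and for Gaussians with free mean and positive-definite covariance as used in deep RL; under artificially restricted parametrizations the statement should be read as applying to the image of the parameter map.
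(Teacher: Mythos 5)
Your proposal is correct. Note that the paper itself states this theorem as an imported result from \citet{wang2020truly} and gives no proof of its own, so there is nothing in the paper to compare against line by line; your two constructions are essentially the standard argument from that source: the ratio constraint fixes the density only at the single sampled action $a_t$, so you spend the one scalar constraint and use the remaining degrees of freedom (a third action absorbing mass in the discrete case, a vanishing isotropic covariance with a compensating mean shift in the Gaussian case) to drive the forward KL to $+\infty$. Both limits check out: the single summand $\pi_{\theta_{\mathrm{old}}}(a'|s_t)\log[\pi_{\theta_{\mathrm{old}}}(a'|s_t)/\pi_{\theta_n}(a'|s_t)]$ diverges while the finitely many other summands are bounded below, and in the Gaussian KL the $\tfrac{1}{2}c^{-2}\,\mathrm{tr}(\Sigma_{\mathrm{old}})$ term dominates the $d\log c$ term. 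The only caveats are the ones you already flag, plus one you half-state: the discrete argument needs $\pi_{\theta_{\mathrm{old}}}(a'|s_t)>0$ for some $a'\neq a_t$ (automatic under the softmax parametrization you assume, but worth making explicit, since a deterministic old policy would make the forward KL bounded on $\Theta$); and your parenthetical that two actions would "pin down both probabilities" is slightly imprecise — the ratio constraint only confines $\pi_\theta(a_t|s_t)$ to a band — though this does not affect the $|\mathcal{A}|\geq 3$ case the theorem actually asserts.
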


Theorem \ref{KL is not bounded} demonstrates that $D_{\mathrm{KL}}(\pi_{\theta_{\mathrm{old}}}\Vert\pi_{\theta})[s_t]$ is not necessarily bounded even if the probability ratio \( r_t(\theta) \) is bounded. However, this theorem considers only an extreme case involving a single data point, which is less typical than the batch processing used in training data. On a broader scale, the heuristic clipping technique employed by PPO aims to bound the TV divergence for sufficient batch sizes \citep{queeney2021generalized}. This relationship is formalized as
\begin{equation}
	\mathbb{E}_{s\sim{\color{MyColor1}\rho_{\pi}(\cdot)}}\left[D_{\mathrm{TV}}(\pi\Vert\tilde{\pi})[s]\right]=\frac{1}{2}\mathop{\mathbb{E}}_{\substack{s\sim{\color{MyColor1}\rho_{\pi}(\cdot)}\\a\sim{\color{MyColor1}\pi(\cdot|s)}}}\left[\left|\frac{\tilde{\pi}(a|s)}{\pi(a|s)}-1\right|\right].
\end{equation}
Then, the performance improvement lower bound (\ref{tv lower bound}) can be rewritten as
\begin{equation}\label{performance difference lower bound with probability ratio}
	\begin{split}
		\eta(\tilde{\pi})-\eta(\pi)\geq&\frac{1}{1-\gamma}\mathbb{E}_{s\sim{\color{MyColor1}\rho_{\pi}(\cdot)},a\sim{\color{MyColor1}\pi(\cdot|s)}}\left[\frac{\tilde{\pi}(a|s)}{\pi(a|s)}\cdot A_{\pi}(s,a)\right]\\
		-&\frac{\xi\gamma}{(1-\gamma)^2}\mathbb{E}_{s\sim{\color{MyColor1}\rho_{\pi}(\cdot)},a\sim{\color{MyColor1}\pi(\cdot|s)}}\left[\left|\frac{\tilde{\pi}(a|s)}{\pi(a|s)}-1\right|\right].\\
	\end{split}
\end{equation}
This explains why PPO attempts to limit the probability ratio $\left|\tilde{\pi}(a|s)/\pi(a|s)-1\right|\leq\epsilon$, as this enforces a TV divergence trust region in expectation.

Finally, we also found that PPO, which aims to bound the TV divergence, can offer a larger solution space compared to methods that incorporate a looser KL divergence as a constraint (e.g., in TRPO). To illustrate this, we present the following proposition:
\begin{proposition}\label{KL is the subset of TV}
	Given the old policy $\pi$, define the solution spaces under the TV and KL divergence constraints as follows:
	\begin{equation}
		\begin{split}
			\Omega_{\mathrm{TV}}&=\{\tilde{\pi} \mid D_{\mathrm{TV}}(\pi\Vert\tilde{\pi})[s]\leq\delta_{\mathrm{TV}},\forall s\in\mathcal{S}\},\\
			\Omega_{\mathrm{KL}}&=\{\tilde{\pi} \mid D_{\mathrm{KL}}(\pi\Vert\tilde{\pi})[s]\leq\delta_{\mathrm{KL}},\forall s\in\mathcal{S}\},\\
		\end{split}
	\end{equation}
	where $\delta_{\mathrm{KL}}>0$ is a predefined threshold. Let $\delta_{\mathrm{TV}}\geq\sqrt{\frac{1}{2}\delta_{\mathrm{KL}}}$, we establish that $\Omega_{\mathrm{KL}}\subset\Omega_{\mathrm{TV}}$.
\end{proposition}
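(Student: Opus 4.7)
The plan is to establish the inclusion via Pinsker's inequality (which the paper has already invoked above) applied pointwise in the state, and then to exhibit a concrete policy that lies in $\Omega_{\mathrm{TV}}$ but not in $\Omega_{\mathrm{KL}}$ to make the inclusion proper.

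First I would fix any $\tilde{\pi}\in\Omega_{\mathrm{KL}}$ and an arbitrary state $s\in\mathcal{S}$. By the defining constraint, $D_{\mathrm{KL}}(\pi\Vert\tilde{\pi})[s]\leq\delta_{\mathrm{KL}}$. Pinsker's inequality applied to the two conditional distributions $\pi(\cdot|s)$ and $\tilde{\pi}(\cdot|s)$ yields
\begin{equation*}
D_{\mathrm{TV}}(\pi\Vert\tilde{\pi})[s]\;\leq\;\sqrt{\tfrac{1}{2}D_{\mathrm{KL}}(\pi\Vert\tilde{\pi})[s]}\;\leq\;\sqrt{\tfrac{1}{2}\delta_{\mathrm{KL}}}\;\leq\;\delta_{\mathrm{TV}},
\end{equation*}
where the final inequality is the hypothesis. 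Since $s$ was arbitrary, this shows $\tilde{\pi}\in\Omega_{\mathrm{TV}}$, giving the inclusion $\Omega_{\mathrm{KL}}\subseteq\Omega_{\mathrm{TV}}$.

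To upgrade this to a strict inclusion, I would exploit the well-known asymmetry that $D_{\mathrm{TV}}$ is uniformly bounded by $1$ while $D_{\mathrm{KL}}$ can be arbitrarily large, even infinite. Concretely, I would construct a state $s$ and a policy $\tilde{\pi}$ such that some action $a^\star$ has $\pi(a^\star|s)>0$ but $\tilde{\pi}(a^\star|s)=0$, with the remaining probability mass placed so that $D_{\mathrm{TV}}(\pi\Vert\tilde{\pi})[s]$ is still at most $\delta_{\mathrm{TV}}$ (take $\pi(a^\star|s)$ smaller than $\delta_{\mathrm{TV}}$ and redistribute to neighboring actions). Then $D_{\mathrm{KL}}(\pi\Vert\tilde{\pi})[s]=+\infty>\delta_{\mathrm{KL}}$, so $\tilde{\pi}\in\Omega_{\mathrm{TV}}\setminus\Omega_{\mathrm{KL}}$.

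There is no real obstacle here; the argument is essentially a one-line application of Pinsker together with a counterexample. The only minor subtlety worth flagging is that the inclusion constraint is required to hold at \emph{every} state $s$, so Pinsker must be applied pointwise rather than in expectation, and the counterexample used for strict inclusion only needs to violate the KL bound at a single state. If one wanted to avoid a degenerate (zero-probability) counterexample, one could instead take $\tilde{\pi}(a^\star|s)=\varepsilon$ and let $\varepsilon\to 0^+$, using the fact that $D_{\mathrm{KL}}$ diverges while $D_{\mathrm{TV}}$ remains bounded.
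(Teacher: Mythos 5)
Your first paragraph is exactly the paper's proof: a pointwise application of Pinsker's inequality followed by the hypothesis $\sqrt{\tfrac{1}{2}\delta_{\mathrm{KL}}}\leq\delta_{\mathrm{TV}}$, so the core argument matches. Your extra counterexample for strictness goes beyond the paper, which only establishes $\Omega_{\mathrm{KL}}\subseteq\Omega_{\mathrm{TV}}$ despite writing $\subset$; if you include it, note that it needs mild assumptions on the given $\pi$ and on $\mathcal{A}$ (e.g., some state--action pair with $0<\pi(a^\star|s)\leq\delta_{\mathrm{TV}}$ and at least two actions), since for a degenerate old policy or a singleton action space the two sets can coincide.
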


\begin{proof}
	For any given $\delta_{\mathrm{KL}}$ and $\tilde{\pi}\in\Omega_{\mathrm{KL}}$, using Pinsker's inequality, we have $D_{\mathrm{TV}}(\pi\Vert\tilde{\pi})[s]\leq\sqrt{\frac{1}{2}D_{\mathrm{KL}}(\pi\Vert\tilde{\pi})[s]}\leq\sqrt{\frac{1}{2}\delta_{\mathrm{KL}}}\leq\delta_{\mathrm{TV}}$, therefore $\tilde{\pi}\in\Omega_{\mathrm{KL}}\Longrightarrow\tilde{\pi}\in\Omega_{\mathrm{TV}}$, which means $\Omega_{\mathrm{KL}}\subset\Omega_{\mathrm{TV}}$, concluding the proof.
\end{proof}

\begin{algorithm*}[!t]
	\caption{Simple Policy Optimization (SPO)}\label{algo}
	\begin{algorithmic}[1]
		\STATE {\bfseries Initialize:} Policy and value networks $\pi_{\theta},V_{\phi}$, hyperparameter $\epsilon$, value loss and policy entropy coefficients $c_1,c_2$
		
		\STATE {\bfseries Output:} Optimal policy network $\pi_{\theta^*}$
		
		\WHILE{not converged}
		{\color{teal}\STATE{\#} Data collection}
		\STATE Collect data $\mathcal{D}=\left\{(s_t,a_t,r_t)\right\}_{t=1}^{N}$ using the current policy network $\pi_{\theta}$
		
		{\color{teal}\STATE{\#} The networks before updating}
		\STATE $\pi_{\theta_{\mathrm{old}}}\leftarrow\pi_{\theta},\enspace V_{\phi_{\mathrm{old}}}\leftarrow V_{\phi}$
		
		{\color{teal}\STATE{\#} Estimate the advantage $\hat{A}(s_t,a_t)$ based on $V_{\phi_{\mathrm{old}}}$}
		\STATE Use GAE \citep{schulman2015high} technique to estimate the advantage $\hat{A}(s_t,a_t)$
		
		{\color{teal}\STATE{\#} Estimate the return $\hat{R}_t$}
		\STATE $\hat{R}_t\leftarrow V_{\phi_{\mathrm{old}}}(s_t)+\hat{A}(s_t,a_t)$
		
		\FOR{each training epoch}

		{\color{teal}\STATE{\#} Compute policy loss $\mathcal{L}_p$ {\color{spo}(This is the only difference between SPO and PPO)}}
		\STATE $\mathcal{L}_p\leftarrow-\frac{1}{N}\sum_{t=1}^{N}\left\{\frac{\pi_{\theta}(a_t|s_t)}{\pi_{\theta_{\mathrm{old}}}(a_t|s_t)}\cdot\hat{A}(s_t,a_t)-\frac{\lvert\hat{A}(s_t,a_t)\rvert}{2\epsilon}\cdot\left[\frac{\pi_{\theta}(a_t|s_t)}{\pi_{\theta_{\mathrm{old}}}(a_t|s_t)}-1\right]^2\right\}$
		
		{\color{teal}\STATE{\#} Compute policy entropy $\mathcal{L}_e$ and value loss $\mathcal{L}_v$}
		\STATE $\mathcal{L}_e\leftarrow\frac{1}{N}\sum_{t=1}^{N}\mathcal{H}(\pi_{\theta}(\cdot|s_t)),\enspace\mathcal{L}_v\leftarrow\frac{1}{2N}\sum_{t=1}^{N}[V_{\phi}(s_t)-\hat{R}_t]^2$
		
		{\color{teal}\STATE{\#} Compute total loss $\mathcal{L}$}
		\STATE $\mathcal{L}\leftarrow\mathcal{L}_p+c_1\mathcal{L}_v-c_2\mathcal{L}_e$
		
		{\color{teal}\STATE{\#} Update parameters $\theta$ and $\phi$ through backpropagation, $\lambda_{\theta}$ and $\lambda_{\phi}$ is the step sizes}
		\STATE $\theta\leftarrow\theta-\lambda_{\theta}\nabla_{\theta}\mathcal{L},\enspace \phi\leftarrow\phi-\lambda_{\phi}\nabla_{\phi}\mathcal{L}$
		\ENDFOR
		\ENDWHILE
	\end{algorithmic}
\end{algorithm*}

Additionally, the optimal solution to the lower bound in the TV divergence solution space, $\Omega_{\mathrm{TV}}$, is expected to be superior. We now present the following theorem:
\begin{theorem}\label{TV lower bound is better}
	Given the old policy $\pi$, and $\Omega_{\mathrm{TV}},\Omega_{\mathrm{KL}}$ presented in Proposition \ref{KL is the subset of TV}, let
	\begin{equation}
		\begin{split}
			\mathcal{L}_{\pi}^{\mathrm{TV}}(\tilde{\pi})=&\frac{1}{1-\gamma}\mathbb{E}_{s\sim{\color{MyColor1}\rho_{\pi}(\cdot)},a\sim{\color{MyColor1}\pi(\cdot|s)}}\left[\frac{\tilde{\pi}(a|s)}{\pi(a|s)}\cdot A_{\pi}(s,a)\right]\\
			&-\frac{2\xi\gamma}{(1-\gamma)^2}\mathbb{E}_{s\sim{\color{MyColor1}\rho_{\pi}(\cdot)}}\left[D_{\mathrm{TV}}(\pi\Vert\tilde{\pi})[s]\right],\\
			\mathcal{L}_{\pi}^{\mathrm{KL}}(\tilde{\pi})=&\frac{1}{1-\gamma}\mathbb{E}_{s\sim{\color{MyColor1}\rho_{\pi}(\cdot)},a\sim{\color{MyColor1}\pi(\cdot|s)}}\left[\frac{\tilde{\pi}(a|s)}{\pi(a|s)}\cdot A_{\pi}(s,a)\right]\\
			&-\frac{2\xi\gamma}{(1-\gamma)^2}\mathbb{E}_{s\sim{\color{MyColor1}\rho_{\pi}(\cdot)}}\left[\sqrt{\frac{1}{2}D_{\mathrm{KL}}(\pi\Vert\tilde{\pi})[s]}\right].\\
		\end{split}
	\end{equation}
	be the lower bounds of performance improvement with TV divergence and KL divergence. Let $\delta_{\mathrm{TV}}\geq\sqrt{\frac{1}{2}\delta_{\mathrm{KL}}}$, denote
	\begin{equation}
		\tilde{\pi}_{\mathrm{TV}}^*=\mathop{\arg\max}\limits_{\tilde{\pi}\in\Omega_{\mathrm{TV}}}\mathcal{L}_{\pi}^{\mathrm{TV}}(\tilde{\pi}),\enspace\tilde{\pi}_{\mathrm{KL}}^*=\mathop{\arg\max}\limits_{\tilde{\pi}\in\Omega_{\mathrm{KL}}}\mathcal{L}_{\pi}^{\mathrm{KL}}(\tilde{\pi}),
	\end{equation}
	then $\mathcal{L}_{\pi}^{\mathrm{TV}}(\tilde{\pi}_{\mathrm{TV}}^*)\geq\mathcal{L}_{\pi}^{\mathrm{KL}}(\tilde{\pi}_{\mathrm{KL}}^*)$.
\end{theorem}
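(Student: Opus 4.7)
The plan is to prove the claim by a clean two-step argument: first show that the TV-based lower bound dominates the KL-based one pointwise on the whole policy space, and then combine this with the set inclusion $\Omega_{\mathrm{KL}}\subset\Omega_{\mathrm{TV}}$ already established in Proposition \ref{KL is the subset of TV}. The result then falls out from the elementary fact that maximizing a larger function over a larger set cannot decrease the optimum.

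First I would note that $\mathcal{L}_{\pi}^{\mathrm{TV}}(\tilde{\pi})$ and $\mathcal{L}_{\pi}^{\mathrm{KL}}(\tilde{\pi})$ share exactly the same surrogate advantage term; they differ only in the penalty term. A direct application of Pinsker's inequality gives, for every state $s$,
\begin{equation}
D_{\mathrm{TV}}(\pi\Vert\tilde{\pi})[s]\leq\sqrt{\tfrac{1}{2}D_{\mathrm{KL}}(\pi\Vert\tilde{\pi})[s]}.
\end{equation}
Taking expectations with respect to $s\sim\rho_{\pi}(\cdot)$ and multiplying by the common nonnegative coefficient $\tfrac{2\xi\gamma}{(1-\gamma)^2}$, the TV penalty term is no larger than the KL penalty term, so $\mathcal{L}_{\pi}^{\mathrm{TV}}(\tilde{\pi})\geq\mathcal{L}_{\pi}^{\mathrm{KL}}(\tilde{\pi})$ for \emph{every} candidate policy $\tilde{\pi}$.

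Next I would invoke Proposition \ref{KL is the subset of TV}, which under the assumption $\delta_{\mathrm{TV}}\geq\sqrt{\tfrac{1}{2}\delta_{\mathrm{KL}}}$ gives $\Omega_{\mathrm{KL}}\subset\Omega_{\mathrm{TV}}$. Chaining the two facts yields
\begin{equation}
\mathcal{L}_{\pi}^{\mathrm{TV}}(\tilde{\pi}_{\mathrm{TV}}^*)=\max_{\tilde{\pi}\in\Omega_{\mathrm{TV}}}\mathcal{L}_{\pi}^{\mathrm{TV}}(\tilde{\pi})\geq\max_{\tilde{\pi}\in\Omega_{\mathrm{KL}}}\mathcal{L}_{\pi}^{\mathrm{TV}}(\tilde{\pi})\geq\max_{\tilde{\pi}\in\Omega_{\mathrm{KL}}}\mathcal{L}_{\pi}^{\mathrm{KL}}(\tilde{\pi})=\mathcal{L}_{\pi}^{\mathrm{KL}}(\tilde{\pi}_{\mathrm{KL}}^*),
\end{equation}
where the first inequality uses the set inclusion and the second uses the pointwise domination established above.

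I do not anticipate a serious obstacle: the argument is essentially a bookkeeping exercise combining Pinsker with set inclusion. The only subtle point to keep an eye on is sign and direction: one must verify that $\xi$, as defined by $\xi=\max_{s}|\mathbb{E}_{a\sim\tilde{\pi}}[A_{\pi}(s,a)]|$, enters both bounds with the same coefficient for a fixed $\tilde{\pi}$, so that the penalty-by-penalty comparison is legitimate, and that the coefficient $\tfrac{2\xi\gamma}{(1-\gamma)^2}$ is nonnegative so that shrinking the divergence term in fact increases the lower bound rather than decreasing it.
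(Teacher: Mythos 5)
Your proposal is correct and follows essentially the same route as the paper's proof: the paper likewise chains the set inclusion $\Omega_{\mathrm{KL}}\subset\Omega_{\mathrm{TV}}$ (giving $\mathcal{L}_{\pi}^{\mathrm{TV}}(\tilde{\pi}_{\mathrm{TV}}^*)\geq\mathcal{L}_{\pi}^{\mathrm{TV}}(\tilde{\pi}_{\mathrm{KL}}^*)$) with Pinsker's inequality applied at $\tilde{\pi}_{\mathrm{KL}}^*$. Your only cosmetic difference is establishing the Pinsker-based domination pointwise for all $\tilde{\pi}$ before taking maxima, rather than at the single point $\tilde{\pi}_{\mathrm{KL}}^*$; the substance is identical.
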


\begin{proof}
	Since $\Omega_{\mathrm{KL}}\subset\Omega_{\mathrm{TV}}$, we have
	\begin{equation}
		\begin{split}
			\mathcal{L}_{\pi}^{\mathrm{TV}}&(\tilde{\pi}_{\mathrm{TV}}^*)\geq\mathcal{L}_{\pi}^{\mathrm{TV}}(\tilde{\pi}_{\mathrm{KL}}^*)=\\
			&\frac{1}{1-\gamma}\mathbb{E}_{s\sim{\color{MyColor1}\rho_{\pi}(\cdot)},a\sim{\color{MyColor1}\pi(\cdot|s)}}\left[\frac{\tilde{\pi}_{\mathrm{KL}}^*(a|s)}{\pi(a|s)}\cdot A_{\pi}(s,a)\right]\\
			&-\frac{2\xi\gamma}{(1-\gamma)^2}\mathbb{E}_{s\sim{\color{MyColor1}\rho_{\pi}(\cdot)}}\left[D_{\mathrm{TV}}(\pi\Vert\tilde{\pi}_{\mathrm{KL}}^*)[s]\right]\geq\\
			&\frac{1}{1-\gamma}\mathbb{E}_{s\sim{\color{MyColor1}\rho_{\pi}(\cdot)},a\sim{\color{MyColor1}\pi(\cdot|s)}}\left[\frac{\tilde{\pi}_{\mathrm{KL}}^*(a|s)}{\pi(a|s)}\cdot A_{\pi}(s,a)\right]\\
			&-\frac{2\xi\gamma}{(1-\gamma)^2}\mathbb{E}_{s\sim{\color{MyColor1}\rho_{\pi}(\cdot)}}\left[\sqrt{\frac{1}{2}D_{\mathrm{KL}}(\pi\Vert\tilde{\pi}_{\mathrm{KL}}^*)[s]}\right]\\
			=&\mathcal{L}_{\pi}^{\mathrm{KL}}(\tilde{\pi}_{\mathrm{KL}}^*),\\
		\end{split}
	\end{equation}
	thus $\mathcal{L}_{\pi}^{\mathrm{TV}}(\tilde{\pi}_{\mathrm{TV}}^*)\geq\mathcal{L}_{\pi}^{\mathrm{KL}}(\tilde{\pi}_{\mathrm{KL}}^*)$, concluding the proof.
\end{proof}

Based on the Proposition \ref{KL is the subset of TV} and Theorem \ref{TV lower bound is better}, we have the following conclusion:
\begin{tcolorbox}[title=\textbf{Conclusion}]
	Optimizing the lower bound with TV divergence constrains offers a more effective solution space than using KL divergence constrains, leading to better policy improvement.
\end{tcolorbox}

As a result, to optimize the lower bound (\ref{performance difference lower bound with probability ratio}), we aim to solve the following constrained optimization problem:
\begin{equation}\label{optimization problem}
	\begin{split}
		\max_{\theta}\enspace &\mathbb{E}_{(s_t,a_t)\sim\pi_{\theta_{\mathrm{old}}}}\left[r_t(\theta)\cdot \hat{A}_t\right], \\
		\text{s.t.}\hspace{2.5pt}\enspace &\mathbb{E}_{(s_t,a_t)\sim\pi_{\theta_{\mathrm{old}}}}\left[\lvert r_t(\theta)-1\rvert\right]\leq\epsilon,     \\
	\end{split}
\end{equation}
where $r_t(\theta)=\pi_{\theta}(a_t|s_t)/\pi_{\theta_{\mathrm{old}}}(a_t|s_t)$ and $\hat{A}_t=\hat{A}(s_t,a_t)$.

PPO attempts to satisfy the constraints of (\ref{optimization problem}) through ratio clipping, but this does not prevent excessive ratio deviations (demonstrated in Figure \ref{optimization behavior}). The underlying reason is that ratio clipping causes certain data points to stop contributing to the gradients. Over multiple iterations, this can lead to uncontrollable updates, as the absence of corrective gradients prevents the policy from recovering. To overcome this issue with ratio clipping, we propose the following objective:
\begin{equation}\label{spo}
	J(\theta)=\mathbb{E}_{(s_t,a_t)\sim\pi_{\theta_{\rm old}}}\left\{r_t(\theta)\cdot\hat{A}_t-\frac{\lvert\hat{A}_t\rvert}{2\epsilon}\cdot\left[r_t(\theta)-1\right]^2\right\}.
\end{equation}
The details of the objective will be discussed in the following section, and the pseudo-code is shown in Algorithm \ref{algo}.

\section{Theoretical Results}
In this section, we provide some theoretical insights of the differences between PPO and SPO, demonstrating that SPO can be more effective in constraining probability ratios.

\subsection{Objective Class}\label{Objective Class}
Simplify the notation by using $r$ and $A$ to represent the probability ratio and the advantage value. Based on the previous analysis, our goal is to find an objective function $f(r,A,\epsilon)$ such that while optimizing the surrogate objective $rA$, the probability ratio is constrained by $\lvert r-1\rvert\leq\epsilon$.

According to (\ref{optimization problem}), for any given $A\neq0$ and $\epsilon>0$, we can write down the following desired optimization problem:
\begin{equation}\label{constrained optimization problem}
	\max_{r}\enspace rA,\enspace\mathrm{s.t.}\enspace\left|r-1\right|\leq\epsilon.
\end{equation}
The objective is linear, so the optimal solution is $r^*=1+\mathrm{sign}(A)\cdot\epsilon$, where $\mathrm{sign}(\cdot)$ is the sign function. Motivated by this, we present the following definition:
\begin{definition}[$\epsilon$-aligned]
	For any given $A\neq0$ and $\epsilon>0$, we say that the function $f(r,A,\epsilon)$ is $\epsilon$-aligned, if it is differentiable and convex with respect to $r$, and attains its maximum value at $r=1+\mathrm{sign}(A)\cdot\epsilon$.
\end{definition}

The objective of PPO in (\ref{ppo}) and SPO in (\ref{spo}) can be expressed as
\begin{equation}
	\begin{split}
		f_{\mathrm{ppo}}&=\min\left[rA,\mathrm{clip}(r,1-\epsilon,1+\epsilon)A\right],\\
		f_{\mathrm{spo}}&=rA-\frac{\lvert A\rvert}{2\epsilon}\cdot\left(r-1\right)^2.\\
	\end{split}
\end{equation}
It can be obtained that $f_{\mathrm{ppo}}$ is not $\epsilon$-aligned, as $f_{\mathrm{ppo}}$ zeros the gradients under some special cases according to (\ref{gradient of ppo}). For $f_{\mathrm{spo}}$, we have the following theorem:
\begin{theorem}
	$f_{\mathrm{spo}}$ is $\epsilon$-aligned.
\end{theorem}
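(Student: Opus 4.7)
The plan is to verify the three conditions of $\epsilon$-alignment for $f_{\mathrm{spo}}(r,A,\epsilon)=rA-\frac{|A|}{2\epsilon}(r-1)^2$ directly, since for fixed $A\neq 0$ and $\epsilon>0$ the expression is just a quadratic polynomial in $r$. Differentiability is immediate because $f_{\mathrm{spo}}$ is polynomial in $r$, so that condition requires no argument beyond noting this fact.

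Next I would compute $\partial f_{\mathrm{spo}}/\partial r = A - \frac{|A|}{\epsilon}(r-1)$ and $\partial^2 f_{\mathrm{spo}}/\partial r^2 = -|A|/\epsilon$. The second derivative has constant sign for every $A\neq 0$ and $\epsilon>0$, which establishes the required curvature condition stated in the definition of $\epsilon$-aligned (I read the definition as requiring definite curvature in $r$ so that the stated critical point is a unique global extremum; the sign simply reflects that $f_{\mathrm{spo}}$ is being maximized rather than minimized, so what is being verified is strict concavity, equivalently that $-f_{\mathrm{spo}}$ is strictly convex).

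Setting the first derivative to zero gives $r-1 = \epsilon A/|A| = \epsilon\,\mathrm{sign}(A)$, hence the unique stationary point is $r^{*}=1+\mathrm{sign}(A)\cdot\epsilon$. Combined with the definite curvature from the previous step, this stationary point is the unique global maximizer of $f_{\mathrm{spo}}(\cdot,A,\epsilon)$, which matches the optimal solution $r^{*}$ of the constrained linear program \eqref{constrained optimization problem} identified just before the definition. All three bullet points of the $\epsilon$-aligned definition are thus verified.

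I do not anticipate any real obstacle here: the computation is two short derivative calculations followed by solving a linear equation, and everything hinges only on $A\neq 0$ and $\epsilon>0$, which are part of the standing hypotheses of the definition. The only subtlety worth flagging in the write-up is the comparison with $f_{\mathrm{ppo}}$: the reason $f_{\mathrm{spo}}$ passes the test while $f_{\mathrm{ppo}}$ fails is exactly that the quadratic penalty continues to supply a non-vanishing restoring gradient when $r$ drifts outside $[1-\epsilon,1+\epsilon]$, whereas the PPO clip zeroes the gradient there, as made explicit in \eqref{gradient of ppo}; I would include a one-line remark to that effect so the theorem is tied back to the motivating discussion.
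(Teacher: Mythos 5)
Your proposal is correct and follows essentially the same route as the paper's proof: observe that $f_{\mathrm{spo}}$ is a quadratic in $r$, set $\partial f_{\mathrm{spo}}/\partial r = A - \frac{|A|}{\epsilon}(r-1) = 0$, and read off the unique maximizer $r=1+\mathrm{sign}(A)\cdot\epsilon$. Your explicit second-derivative check and your reading of the curvature condition as strict concavity (rather than the ``convex'' wording in the definition and the paper's proof) is a small but worthwhile clarification, since a genuinely convex function could not attain its maximum at an interior stationary point.
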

\begin{proof}
	Obviously, $f_{\mathrm{spo}}$ is differentiable and convex with respect to $r$ since $f_{\mathrm{spo}}$ is a quadratic polynomial of $r$. For any given $A\neq0$ and $\epsilon>0$, let $\partial f_{\mathrm{spo}}(r,A,\epsilon)/\partial r=0$, then
	\begin{equation}
		\frac{\partial f_{\mathrm{spo}}(r,A,\epsilon)}{\partial r}=A-\frac{\left|A\right|}{\epsilon}\cdot(r-1)=0,
	\end{equation}
	thus $r=1+\mathrm{sign}(A)\cdot\epsilon$ is the optimal solution for $f_{\mathrm{spo}}$.
\end{proof}

Note that $f_{\mathrm{spo}}$ is not the only objective function that satisfies the definition. It can be proved that there is a simple objective function $f_{\mathrm{simple}}=-(r-1-\mathrm{sign}(A)\cdot\epsilon)^2$, which is also $\epsilon$-aligned. We will discuss the differences between these two in Section \ref{Constraining Ratio Deviation}.

\subsection{Analysis of New Objective}
We show that the optimization process of SPO can more effectively bound the probability ratio, as can be seen from Figure \ref{gradient}. The largest circular area in the figure represents the boundary on the probability ratio. The green circles represent data points with non-zero gradients during the training process, while the gray circles represent data points with zero gradients.

\begin{figure}[!h]
	\centering
	\includegraphics[scale=0.6]{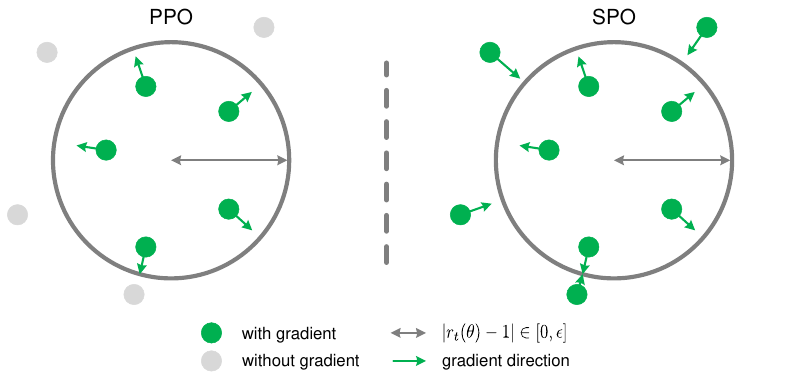}
	\caption{In PPO, certain data points exhibit zero gradients, while in SPO, all data points generate non-zero gradients that guide towards the constraint boundary.}\label{gradient}
\end{figure}

During the training process of PPO, certain data points that exceed the probability ratio bound cease to provide gradients. In contrast, all data points in SPO contribute gradients that guide the optimization towards the constraint boundary. As training progresses, PPO will accumulate more gray circles that no longer provide gradients and may be influenced by the harmful gradients from green circles. This phenomenon could potentially push the gray circles further away from the constraint boundary. In contrast, the gradient directions of all data points in SPO point towards the constraint boundary. This indicates that SPO imposes stronger constraints on the probability ratio.

\section{Experiments}
We report results on the Atari 2600 \citep{bellemare2013arcade, machado2018revisiting} and MuJoCo \citep{todorov2012mujoco} benchmarks. In all our experiments, we utilize the RL library Gymnasium \citep{towers2024gymnasium}, which serves as a central abstraction to ensure broad interoperability between benchmark environments and training algorithms.

\subsection{Comparing Algorithms}
Our implementation of SPO is compared against PPO-Clip \citep{schulman2017proximal}, PPO-Penalty \citep{schulman2017proximal}, SPU \citep{vuong2018supervised}, PPO-RB \citep{wang2020truly}, TR-PPO \citep{wang2020truly}, TR-PPO-RB \citep{wang2020truly}, and RPO \citep{gan2024reflective} in MuJoCo benchmark. We compute the algorithm's performance across ten separate runs with different random seeds. In addition, we emphasize that in all comparative experiments involving the same settings for SPO and PPO, the only modification in SPO is replacing the PPO's objective with (\ref{spo}), no further code-level tuning is applied to SPO, highlighting its simplicity and efficiency.

Due to the absence of human score baselines in MuJoCo \citep{todorov2012mujoco}, we normalize the algorithms' performance across all environments using the training data of PPO-Clip, specifically,
\begin{equation}\label{norm}
	\mathrm{normalized}(\mathrm{score})=\frac{\mathrm{score}-\min}{\max-\min},
\end{equation}
where $\max$ and $\min$ represent the maximum and minimum validation returns of PPO-Clip during training, respectively.

\begin{figure*}[!t]
	\centering
	\includegraphics[scale=0.35]{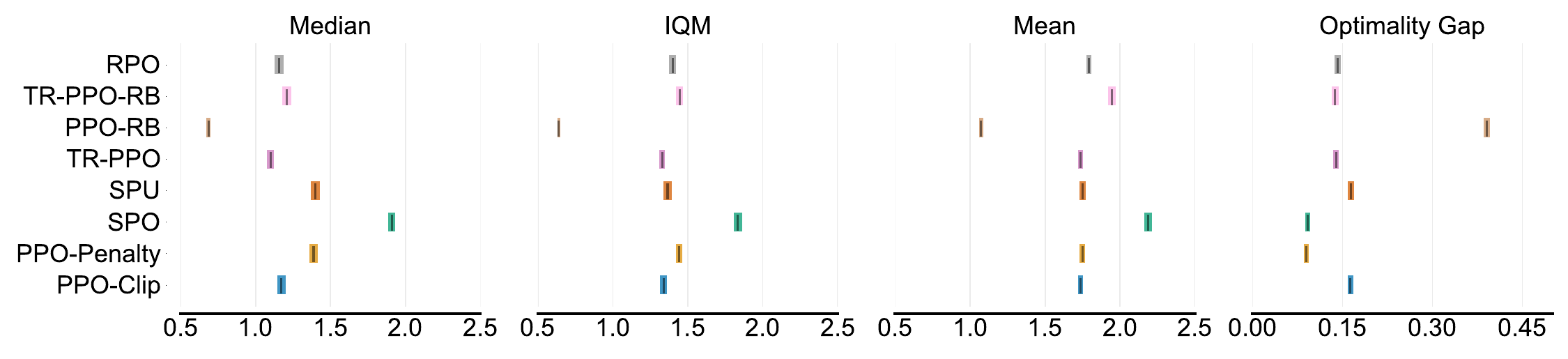}
	\caption{Aggregate metrics on MuJoCo-v4 with 95\% CIs based on 6 environments. We collected the returns of each algorithm over the last 1\% training steps across ten random seeds. In this context, higher median, IQM and mean scores and lower optimality gap are better.} \label{compare_algos}
\end{figure*}

\begin{figure*}[!t]
	\centering
	\includegraphics[scale=0.45]{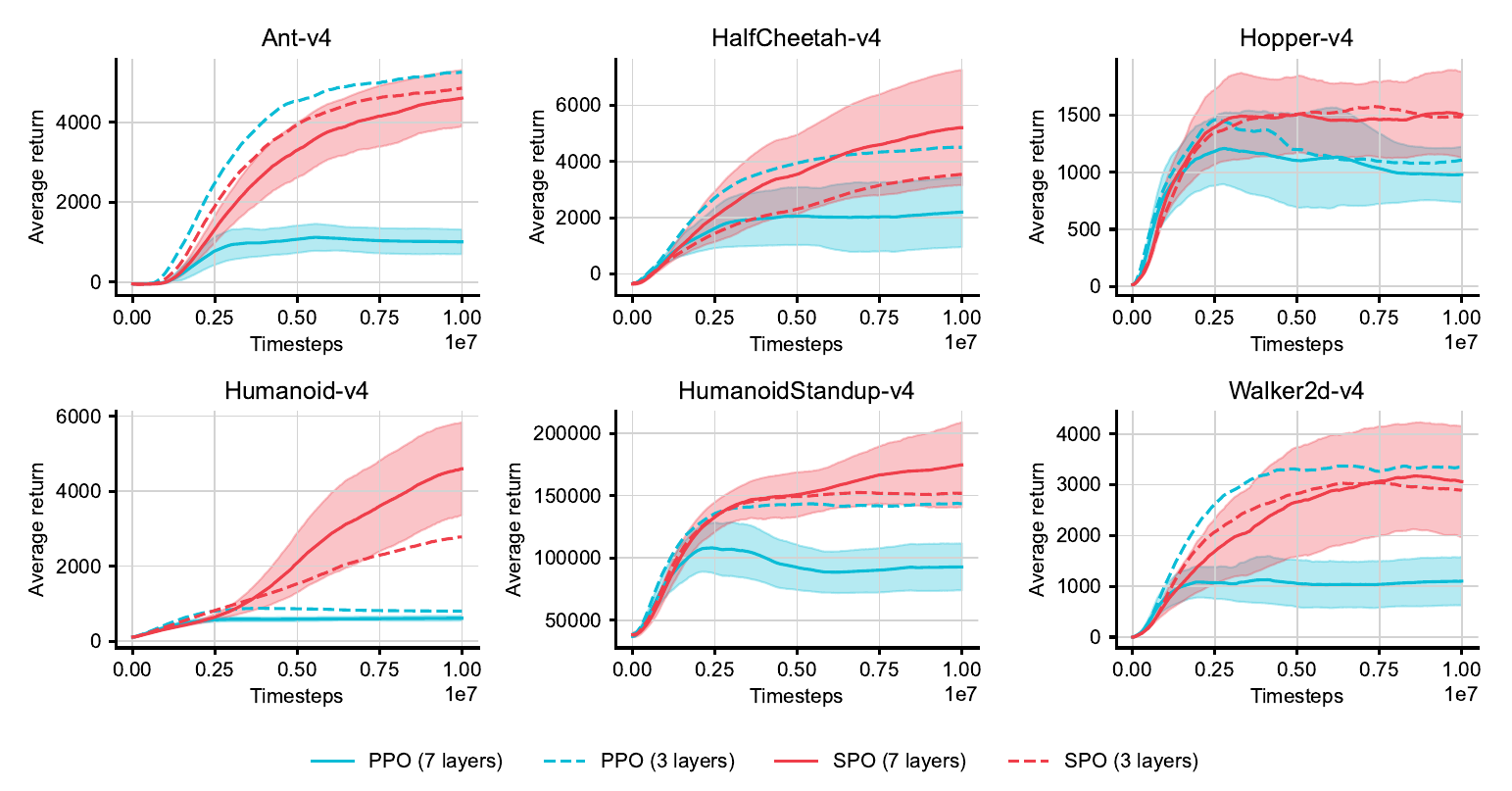}
	\caption{Training performance of PPO and SPO with different policy network layers in MuJoCo benchmark. The mean and standard deviation are shown across 5 random seeds.}\label{policy network layer}
\end{figure*}

\begin{table*}[!t]
	\tiny
	\centering
	\caption{Average return of PPO and SPO in the last 10\% training steps across 5 separate runs with different random seeds, with their maximum ratio deviation during the entire training process.}\label{max ratio deviation}
	\begin{tabular}{cr|cc|cc}
		\toprule
		\multirow{2}*{Environment}&\multirow{2}*{Index} &\multicolumn{2}{c|}{3 layers} & \multicolumn{2}{c}{7 layers} \\
		& & PPO & SPO & PPO & SPO \\
		\midrule
		\multirow{2}*{Ant-v4}&Average return ($\uparrow$) & \bf5323.2 & 4911.3 & 1002.8 & \bf4672.5 \\
		&Ratio deviation ($\downarrow$)  &0.229 & \bf0.101 & 548.060 & \bf0.190 \\
		
		\midrule
		\multirow{2}*{HalfCheetah-v4}&Average return ($\uparrow$) & \bf4550.2 & 3602.4 & 2242.3 & \bf5307.3 \\
		&Ratio deviation ($\downarrow$)  &0.225 & \bf0.086 & 1675.340 & \bf0.188 \\
		
		\midrule
		\multirow{2}*{Hopper-v4}&Average return ($\uparrow$) & 1119.4 & \bf1480.3 & 975.9 & \bf1507.6 \\
		&Ratio deviation ($\downarrow$)  &0.164 & \bf0.067 & 113.178 & \bf0.194 \\
		
		\midrule
		\multirow{2}*{Humanoid-v4}&Average return ($\uparrow$) & 795.1 & \bf2870.0 & 614.1 & \bf4769.9 \\
		&Ratio deviation ($\downarrow$)  &3689.957 & \bf0.179 & 2411.845 & \bf0.191 \\
		
		\midrule
		\multirow{2}*{HumanoidStandup-v4}&Average return ($\uparrow$) & 143908.8 & \bf152378.7 & 92849.7 & \bf176928.9 \\
		&Ratio deviation ($\downarrow$)  &2547.499 & \bf0.182 & 4018.718 & \bf0.187 \\
		
		\midrule
		\multirow{2}*{Walker2d-v4}&Average return ($\uparrow$) & \bf3352.3 & 2870.2 & 1110.9 & \bf3008.1 \\
		&Ratio deviation ($\downarrow$)  &0.170 & \bf0.070 & 998.101 & \bf0.157 \\
		\bottomrule
	\end{tabular}
\end{table*}

As suggested in \citet{agarwal2021deep}, we employ stratified bootstrap confidence intervals to assess the confidence intervals of the algorithm and evaluate the composite metrics of SPO against other baselines, as illustrated in Figure \ref{compare_algos}. It can be observed that SPO achieved the best performance across nearly all statistical metrics, which fully demonstrates the strong potential of SPO. For the Atari 2600 benchmark \citep{bellemare2013arcade}, the main results are presented in Appendix \ref{Atari 2600} and \ref{More Results}.

\subsection{Scaling Policy Network}
To investigate how scaling policy network size impacts the sample efficiency of both PPO and SPO in MuJoCo, the number of policy network layers was increased without altering the hyperparameters or other settings. The standard deviation of the algorithm's performance was computed and visualized across five separate runs with different random seeds. The results, shown in Figure \ref{policy network layer}, \ref{layers and mini-batch size} and Table \ref{max ratio deviation}, where the \textit{ratio deviation} indicates the largest value of average ratio deviation in a batch during the entire training process, i.e., $\frac{1}{\left|\mathcal{D}\right|}\sum_{(s_t,a_t)\sim\mathcal{D}}\left|r_t(\theta)-1\right|$.

\begin{figure*}[!t]
	\centering
	\includegraphics[scale=0.5]{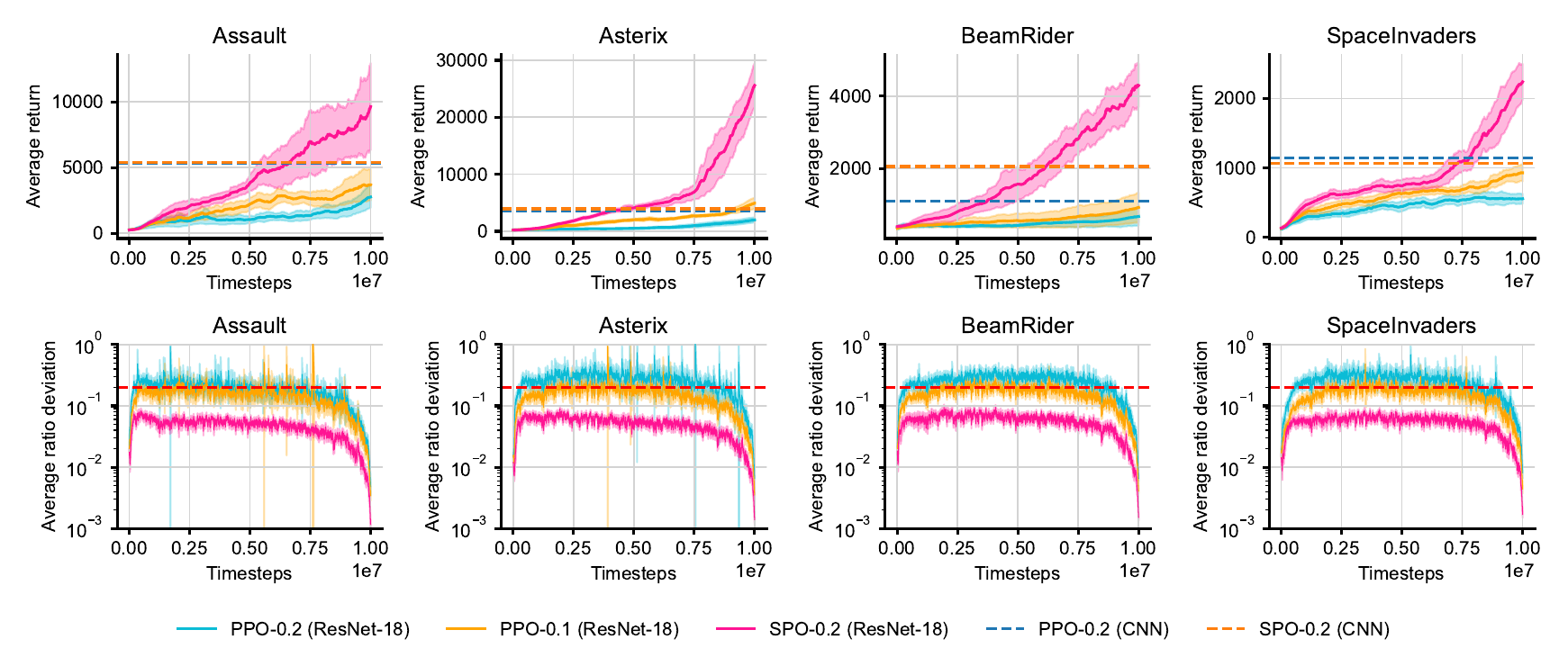}
	\caption{Training performance of SPO using ResNet-18 as the encoder compared to the PPO and SPO using default CNN (shown with the reference line). The mean and standard deviation are shown across 3 random seeds, and the red dashed line represents $\epsilon=0.2$.}\label{resnet}
\end{figure*}

It can be observed that as the network deepens, the performance of PPO collapses in most environments, with uncontrollable probability ratio deviations. In contrast, the performance of SPO outperforms that of shallow networks in almost all environments and constrains the probability ratio deviation effectively. Furthermore, the statistical metrics of SPO generally outperform PPO's and demonstrate relative robustness to variations in network depth and mini-batch size.

We also trained the ResNet-18\footnote{Since \citet{bhatt2019crossnorm} demonstrated that batch normalization is harmful to RL training, we removed batch normalization.} \citep{he2016deep} as the encoder on the Atari 2600 benchmark, the results are shown in Figure \ref{resnet}. As the network's capacity increases, the performance of SPO is significantly improved. Moreover, SPO can still maintain a good probability ratio constraint, thereby benefiting from the theoretical lower bound (\ref{performance difference lower bound with probability ratio}). In contrast, it is challenging to train large neural networks with PPO because the probability ratio cannot be controlled during training, even employing a smaller $\epsilon=0.1$.

\begin{figure}[!t]
	\centering
	\includegraphics[scale=0.4]{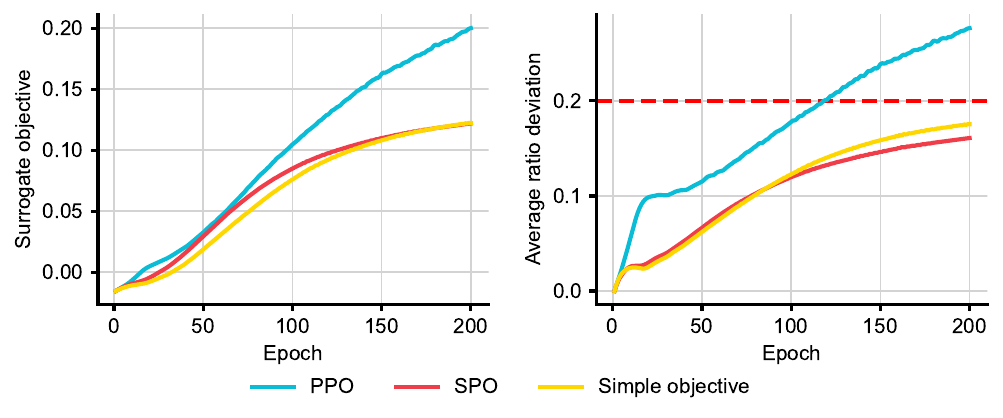}
	\caption{The optimization behavior of $f_{\mathrm{ppo}}$, $f_{\mathrm{spo}}$, and $f_{\mathrm{simple}}$.}\label{surrogate and ratio}
\end{figure}

\subsection{Constraining Ratio Deviation}\label{Constraining Ratio Deviation}
To further investigate the optimization behavior of different objective functions that satisfy the $\epsilon$-aligned definition, we visualize the optimization process of $f_{\mathrm{ppo}}$, $f_{\mathrm{spo}}$, and $f_{\mathrm{simple}}$ presented in Section \ref{Objective Class}, on the same batch of advantage values initialized from a standard Gaussian distribution, as shown in Figure \ref{surrogate and ratio}.

We can observe that while PPO achieves the best performance in optimizing the surrogate objective, it also leads to uncontrollable ratio deviations. In contrast, the two objectives that satisfy the $\epsilon$-aligned definition effectively constrain the ratio deviations during the optimization process.

Furthermore, we also observe that $f_{\mathrm{spo}}$ achieves better optimization of the surrogate objective compared to $f_{\mathrm{simple}}$, while $f_{\mathrm{simple}}$ converges more quickly to the probability ratio boundary. This aligns with our expectations, as the optimization objective of $f_{\mathrm{simple}}$ only depends on the sign of the advantage values. As a result, $f_{\mathrm{simple}}$ pushes each data point equally toward the constraint boundary, which results in the magnitude of the advantage values being less effectively utilized compared to $f_{\mathrm{spo}}$, which makes it difficult to efficiently optimize the surrogate objective.

\section{Conclusion}
In this paper, we introduce \textit{Simple Policy Optimization} (SPO), a novel unconstrained first-order algorithm that effectively combines the strengths of Trust Region Policy Optimization (TRPO) and Proximal Policy Optimization (PPO). SPO maintains optimization within the trust region, benefiting from TRPO’s theoretical guarantees while preserving the efficiency of PPO. Our experimental results demonstrate that SPO achieves competitive performance across various benchmarks with a simple implementation. Moreover, SPO simplifies the training of deep policy networks, addressing a key challenge faced by existing algorithms. These findings indicate that SPO is a promising approach for advancing model-free reinforcement learning. In future work, SPO holds potential for impactful applications in areas such as language models, robotic control, and financial modeling. With further research and refinement, we believe SPO will drive innovation and breakthroughs across these fields.

\section*{Acknowledgements}
We would like to thank the anonymous ICLR and ICML reviewers for their insightful and constructive comments.

\section*{Impact Statement}
This paper presents work whose goal is to advance the field of Machine Learning. There are many potential societal consequences of our work, none which we feel must be specifically highlighted here.

\bibliography{example_paper}

\begin{thebibliography}{37}
\providecommand{\natexlab}[1]{#1}
\providecommand{\url}[1]{\texttt{#1}}
\expandafter\ifx\csname urlstyle\endcsname\relax
  \providecommand{\doi}[1]{doi: #1}\else
  \providecommand{\doi}{doi: \begingroup \urlstyle{rm}\Url}\fi

\bibitem[Achiam et~al.(2017)Achiam, Held, Tamar, and
  Abbeel]{achiam2017constrained}
Achiam, J., Held, D., Tamar, A., and Abbeel, P.
\newblock Constrained policy optimization.
\newblock In \emph{International conference on machine learning}, pp.\  22--31.
  PMLR, 2017.

\bibitem[Agarwal et~al.(2021)Agarwal, Schwarzer, Castro, Courville, and
  Bellemare]{agarwal2021deep}
Agarwal, R., Schwarzer, M., Castro, P.~S., Courville, A.~C., and Bellemare, M.
\newblock Deep reinforcement learning at the edge of the statistical precipice.
\newblock \emph{Advances in neural information processing systems},
  34:\penalty0 29304--29320, 2021.

\bibitem[Andrychowicz et~al.(2021)Andrychowicz, Raichuk, Sta{\'n}czyk, Orsini,
  Girgin, Marinier, Hussenot, Geist, Pietquin, Michalski,
  et~al.]{andrychowicz2021matters}
Andrychowicz, M., Raichuk, A., Sta{\'n}czyk, P., Orsini, M., Girgin, S.,
  Marinier, R., Hussenot, L., Geist, M., Pietquin, O., Michalski, M., et~al.
\newblock What matters for on-policy deep actor-critic methods? a large-scale
  study.
\newblock In \emph{International conference on learning representations}, 2021.

\bibitem[Bellemare et~al.(2013)Bellemare, Naddaf, Veness, and
  Bowling]{bellemare2013arcade}
Bellemare, M.~G., Naddaf, Y., Veness, J., and Bowling, M.
\newblock The arcade learning environment: An evaluation platform for general
  agents.
\newblock \emph{Journal of Artificial Intelligence Research}, 47:\penalty0
  253--279, 2013.

\bibitem[Bhatt et~al.(2019)Bhatt, Argus, Amiranashvili, and
  Brox]{bhatt2019crossnorm}
Bhatt, A., Argus, M., Amiranashvili, A., and Brox, T.
\newblock Crossnorm: Normalization for off-policy td reinforcement learning.
\newblock \emph{arXiv preprint arXiv:1902.05605}, 10, 2019.

\bibitem[Black et~al.(2023)Black, Janner, Du, Kostrikov, and
  Levine]{black2023training}
Black, K., Janner, M., Du, Y., Kostrikov, I., and Levine, S.
\newblock Training diffusion models with reinforcement learning.
\newblock \emph{arXiv preprint arXiv:2305.13301}, 2023.

\bibitem[Engstrom et~al.(2020)Engstrom, Ilyas, Santurkar, Tsipras, Janoos,
  Rudolph, and Madry]{engstrom2020implementation}
Engstrom, L., Ilyas, A., Santurkar, S., Tsipras, D., Janoos, F., Rudolph, L.,
  and Madry, A.
\newblock Implementation matters in deep policy gradients: A case study on ppo
  and trpo.
\newblock \emph{arXiv preprint arXiv:2005.12729}, 2020.

\bibitem[Gan et~al.(2024)Gan, Yan, Wu, and Xing]{gan2024reflective}
Gan, Y., Yan, R., Wu, Z., and Xing, J.
\newblock Reflective policy optimization.
\newblock \emph{arXiv preprint arXiv:2406.03678}, 2024.

\bibitem[Greensmith et~al.(2004)Greensmith, Bartlett, and
  Baxter]{greensmith2004variance}
Greensmith, E., Bartlett, P.~L., and Baxter, J.
\newblock Variance reduction techniques for gradient estimates in reinforcement
  learning.
\newblock \emph{Journal of Machine Learning Research}, 5\penalty0 (9), 2004.

\bibitem[He et~al.(2016)He, Zhang, Ren, and Sun]{he2016deep}
He, K., Zhang, X., Ren, S., and Sun, J.
\newblock Deep residual learning for image recognition.
\newblock In \emph{Proceedings of the IEEE conference on computer vision and
  pattern recognition}, pp.\  770--778, 2016.

\bibitem[Heess et~al.(2017)Heess, Tb, Sriram, Lemmon, Merel, Wayne, Tassa,
  Erez, Wang, Eslami, et~al.]{heess2017emergence}
Heess, N., Tb, D., Sriram, S., Lemmon, J., Merel, J., Wayne, G., Tassa, Y.,
  Erez, T., Wang, Z., Eslami, S., et~al.
\newblock Emergence of locomotion behaviours in rich environments.
\newblock \emph{arXiv preprint arXiv:1707.02286}, 2017.

\bibitem[Huang et~al.(2022{\natexlab{a}})Huang, Dossa, Raffin, Kanervisto, and
  Wang]{huang202237}
Huang, S., Dossa, R. F.~J., Raffin, A., Kanervisto, A., and Wang, W.
\newblock The 37 implementation details of proximal policy optimization.
\newblock \emph{The ICLR Blog Track 2023}, 2022{\natexlab{a}}.

\bibitem[Huang et~al.(2022{\natexlab{b}})Huang, Dossa, Ye, Braga, Chakraborty,
  Mehta, and Ara{\~A}{\v{s}}jo]{huang2022cleanrl}
Huang, S., Dossa, R. F.~J., Ye, C., Braga, J., Chakraborty, D., Mehta, K., and
  Ara{\~A}{\v{s}}jo, J.~G.
\newblock Cleanrl: High-quality single-file implementations of deep
  reinforcement learning algorithms.
\newblock \emph{Journal of Machine Learning Research}, 23\penalty0
  (274):\penalty0 1--18, 2022{\natexlab{b}}.

\bibitem[Ilyas et~al.(2018)Ilyas, Engstrom, Santurkar, Tsipras, Janoos,
  Rudolph, and Madry]{ilyas2018deep}
Ilyas, A., Engstrom, L., Santurkar, S., Tsipras, D., Janoos, F., Rudolph, L.,
  and Madry, A.
\newblock Are deep policy gradient algorithms truly policy gradient algorithms.
\newblock \emph{arXiv preprint arXiv:1811.02553}, 2018.

\bibitem[Kakade \& Langford(2002)Kakade and Langford]{kakade2002approximately}
Kakade, S. and Langford, J.
\newblock Approximately optimal approximate reinforcement learning.
\newblock In \emph{Proceedings of the Nineteenth International Conference on
  Machine Learning}, pp.\  267--274, 2002.

\bibitem[Lehmann(2024)]{lehmann2024definitive}
Lehmann, M.
\newblock The definitive guide to policy gradients in deep reinforcement
  learning: Theory, algorithms and implementations.
\newblock \emph{arXiv preprint arXiv:2401.13662}, 2024.

\bibitem[Machado et~al.(2018)Machado, Bellemare, Talvitie, Veness, Hausknecht,
  and Bowling]{machado2018revisiting}
Machado, M.~C., Bellemare, M.~G., Talvitie, E., Veness, J., Hausknecht, M., and
  Bowling, M.
\newblock Revisiting the arcade learning environment: Evaluation protocols and
  open problems for general agents.
\newblock \emph{Journal of Artificial Intelligence Research}, 61:\penalty0
  523--562, 2018.

\bibitem[Makoviychuk et~al.(2021)Makoviychuk, Wawrzyniak, Guo, Lu, Storey,
  Macklin, Hoeller, Rudin, Allshire, Handa, et~al.]{makoviychuk2021isaac}
Makoviychuk, V., Wawrzyniak, L., Guo, Y., Lu, M., Storey, K., Macklin, M.,
  Hoeller, D., Rudin, N., Allshire, A., Handa, A., et~al.
\newblock Isaac gym: High performance gpu-based physics simulation for robot
  learning.
\newblock \emph{arXiv preprint arXiv:2108.10470}, 2021.

\bibitem[Mnih et~al.(2015)Mnih, Kavukcuoglu, Silver, Rusu, Veness, Bellemare,
  Graves, Riedmiller, Fidjeland, Ostrovski, et~al.]{mnih2015human}
Mnih, V., Kavukcuoglu, K., Silver, D., Rusu, A.~A., Veness, J., Bellemare,
  M.~G., Graves, A., Riedmiller, M., Fidjeland, A.~K., Ostrovski, G., et~al.
\newblock Human-level control through deep reinforcement learning.
\newblock \emph{nature}, 518\penalty0 (7540):\penalty0 529--533, 2015.

\bibitem[Ouyang et~al.(2022)Ouyang, Wu, Jiang, Almeida, Wainwright, Mishkin,
  Zhang, Agarwal, Slama, Ray, et~al.]{ouyang2022training}
Ouyang, L., Wu, J., Jiang, X., Almeida, D., Wainwright, C., Mishkin, P., Zhang,
  C., Agarwal, S., Slama, K., Ray, A., et~al.
\newblock Training language models to follow instructions with human feedback.
\newblock \emph{Advances in neural information processing systems},
  35:\penalty0 27730--27744, 2022.

\bibitem[Queeney et~al.(2021)Queeney, Paschalidis, and
  Cassandras]{queeney2021generalized}
Queeney, J., Paschalidis, Y., and Cassandras, C.~G.
\newblock Generalized proximal policy optimization with sample reuse.
\newblock \emph{Advances in Neural Information Processing Systems},
  34:\penalty0 11909--11919, 2021.

\bibitem[Rudin et~al.(2022)Rudin, Hoeller, Reist, and
  Hutter]{rudin2022learning}
Rudin, N., Hoeller, D., Reist, P., and Hutter, M.
\newblock Learning to walk in minutes using massively parallel deep
  reinforcement learning.
\newblock In \emph{Conference on Robot Learning}, pp.\  91--100. PMLR, 2022.

\bibitem[Schulman et~al.(2015{\natexlab{a}})Schulman, Levine, Abbeel, Jordan,
  and Moritz]{schulman2015trust}
Schulman, J., Levine, S., Abbeel, P., Jordan, M., and Moritz, P.
\newblock Trust region policy optimization.
\newblock In \emph{International conference on machine learning}, pp.\
  1889--1897. PMLR, 2015{\natexlab{a}}.

\bibitem[Schulman et~al.(2015{\natexlab{b}})Schulman, Moritz, Levine, Jordan,
  and Abbeel]{schulman2015high}
Schulman, J., Moritz, P., Levine, S., Jordan, M., and Abbeel, P.
\newblock High-dimensional continuous control using generalized advantage
  estimation.
\newblock \emph{arXiv preprint arXiv:1506.02438}, 2015{\natexlab{b}}.

\bibitem[Schulman et~al.(2017)Schulman, Wolski, Dhariwal, Radford, and
  Klimov]{schulman2017proximal}
Schulman, J., Wolski, F., Dhariwal, P., Radford, A., and Klimov, O.
\newblock Proximal policy optimization algorithms.
\newblock \emph{arXiv preprint arXiv:1707.06347}, 2017.

\bibitem[Silver et~al.(2016)Silver, Huang, Maddison, Guez, Sifre, Van
  Den~Driessche, Schrittwieser, Antonoglou, Panneershelvam, Lanctot,
  et~al.]{silver2016mastering}
Silver, D., Huang, A., Maddison, C.~J., Guez, A., Sifre, L., Van Den~Driessche,
  G., Schrittwieser, J., Antonoglou, I., Panneershelvam, V., Lanctot, M.,
  et~al.
\newblock Mastering the game of go with deep neural networks and tree search.
\newblock \emph{nature}, 529\penalty0 (7587):\penalty0 484--489, 2016.

\bibitem[Silver et~al.(2017)Silver, Schrittwieser, Simonyan, Antonoglou, Huang,
  Guez, Hubert, Baker, Lai, Bolton, et~al.]{silver2017mastering}
Silver, D., Schrittwieser, J., Simonyan, K., Antonoglou, I., Huang, A., Guez,
  A., Hubert, T., Baker, L., Lai, M., Bolton, A., et~al.
\newblock Mastering the game of go without human knowledge.
\newblock \emph{nature}, 550\penalty0 (7676):\penalty0 354--359, 2017.

\bibitem[Silver et~al.(2018)Silver, Hubert, Schrittwieser, Antonoglou, Lai,
  Guez, Lanctot, Sifre, Kumaran, Graepel, et~al.]{silver2018general}
Silver, D., Hubert, T., Schrittwieser, J., Antonoglou, I., Lai, M., Guez, A.,
  Lanctot, M., Sifre, L., Kumaran, D., Graepel, T., et~al.
\newblock A general reinforcement learning algorithm that masters chess, shogi,
  and go through self-play.
\newblock \emph{Science}, 362\penalty0 (6419):\penalty0 1140--1144, 2018.

\bibitem[Sutton \& Barto(2018)Sutton and Barto]{sutton2018reinforcement}
Sutton, R.~S. and Barto, A.~G.
\newblock \emph{Reinforcement learning: An introduction}.
\newblock MIT press, 2018.

\bibitem[Sutton et~al.(2000)Sutton, Singh, and McAllester]{sutton2000comparing}
Sutton, R.~S., Singh, S., and McAllester, D.
\newblock Comparing policy-gradient algorithms.
\newblock \emph{IEEE Transactions on Systems, Man, and Cybernetics}, 2000.

\bibitem[Todorov et~al.(2012)Todorov, Erez, and Tassa]{todorov2012mujoco}
Todorov, E., Erez, T., and Tassa, Y.
\newblock Mujoco: A physics engine for model-based control.
\newblock In \emph{2012 IEEE/RSJ international conference on intelligent robots
  and systems}, pp.\  5026--5033. IEEE, 2012.

\bibitem[Towers et~al.(2024)Towers, Kwiatkowski, Terry, Balis, De~Cola, Deleu,
  Goul{\~a}o, Kallinteris, Krimmel, KG, et~al.]{towers2024gymnasium}
Towers, M., Kwiatkowski, A., Terry, J., Balis, J.~U., De~Cola, G., Deleu, T.,
  Goul{\~a}o, M., Kallinteris, A., Krimmel, M., KG, A., et~al.
\newblock Gymnasium: A standard interface for reinforcement learning
  environments.
\newblock \emph{arXiv preprint arXiv:2407.17032}, 2024.

\bibitem[Vinyals et~al.(2019)Vinyals, Babuschkin, Czarnecki, Mathieu, Dudzik,
  Chung, Choi, Powell, Ewalds, Georgiev, et~al.]{vinyals2019grandmaster}
Vinyals, O., Babuschkin, I., Czarnecki, W.~M., Mathieu, M., Dudzik, A., Chung,
  J., Choi, D.~H., Powell, R., Ewalds, T., Georgiev, P., et~al.
\newblock Grandmaster level in starcraft ii using multi-agent reinforcement
  learning.
\newblock \emph{nature}, 575\penalty0 (7782):\penalty0 350--354, 2019.

\bibitem[Vuong et~al.(2018)Vuong, Zhang, and Ross]{vuong2018supervised}
Vuong, Q., Zhang, Y., and Ross, K.~W.
\newblock Supervised policy update for deep reinforcement learning.
\newblock \emph{arXiv preprint arXiv:1805.11706}, 2018.

\bibitem[Wang et~al.(2020)Wang, He, and Tan]{wang2020truly}
Wang, Y., He, H., and Tan, X.
\newblock Truly proximal policy optimization.
\newblock In \emph{Uncertainty in artificial intelligence}, pp.\  113--122.
  PMLR, 2020.

\bibitem[Ye et~al.(2020)Ye, Liu, Sun, Shi, Zhao, Wu, Yu, Yang, Wu, Guo,
  et~al.]{ye2020mastering}
Ye, D., Liu, Z., Sun, M., Shi, B., Zhao, P., Wu, H., Yu, H., Yang, S., Wu, X.,
  Guo, Q., et~al.
\newblock Mastering complex control in moba games with deep reinforcement
  learning.
\newblock In \emph{Proceedings of the AAAI Conference on Artificial
  Intelligence}, volume~34, pp.\  6672--6679, 2020.

\bibitem[Zhuang et~al.(2023)Zhuang, Lei, Liu, Wang, and
  Guo]{zhuang2023behavior}
Zhuang, Z., Lei, K., Liu, J., Wang, D., and Guo, Y.
\newblock Behavior proximal policy optimization.
\newblock \emph{arXiv preprint arXiv:2302.11312}, 2023.

\end{thebibliography}
\bibliographystyle{icml2025}

\newpage
\appendix
\onecolumn
\section{Atari 2600}\label{Atari 2600}
\begin{figure*}[!h]
	\centering
	\includegraphics[scale=0.4]{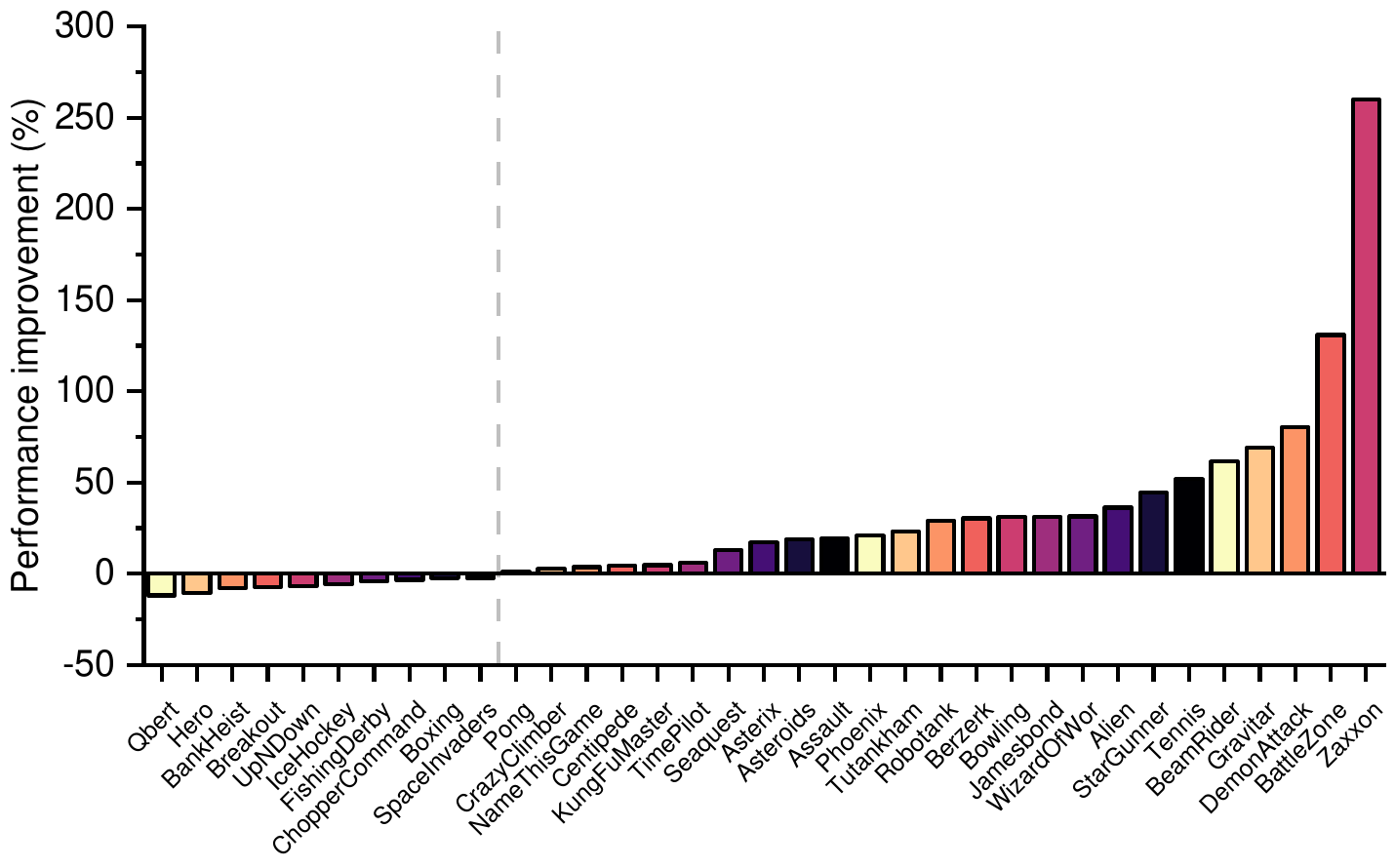}
	\caption{Final performance of SPO compared to PPO across 35 games in Atari 2600 environment, using default CNN as the encoder.}\label{atari 35}
\end{figure*}

\section{Hyperparameters}\label{hyperparameters}
\begin{table}[!h]
	\scriptsize
	\centering
	\caption{Detailed hyperparameters used in SPO.}
	\begin{tabular}{c|cc}
		\toprule
		Hyperparameters & Atari 2600 \citep{bellemare2013arcade} & MuJoCo \citep{todorov2012mujoco} \\
		\midrule
		Number of workers & 8 &8  \\
		Horizon & 128 & 256 \\
		Learning rate& 0.00025 & 0.0003  \\
		Learning rate decay & Linear& Linear \\
		Optimizer & Adam& Adam \\
		Total steps & 10M & 10M \\
		Batch size & 1024 & 2048 \\
		Update epochs &4 &10 \\
		Mini-batches & 4& 4 \\
		Mini-batch size &  256&  512 \\
		GAE parameter $\lambda$  & 0.95& 0.95\\
		Discount factor $\gamma$  & 0.99& 0.99\\
		Value loss coefficient $c_1$ & 0.5& 0.5 \\
		Entropy loss coefficient $c_2$ & 0.01& 0.0 \\
		Probability ratio hyperparameter $\epsilon$ & 0.2& 0.2 \\
		\bottomrule
	\end{tabular}
\end{table}

\section{More Results}\label{More Results}
\begin{figure*}[!h]
	\centering
	\includegraphics[scale=0.35]{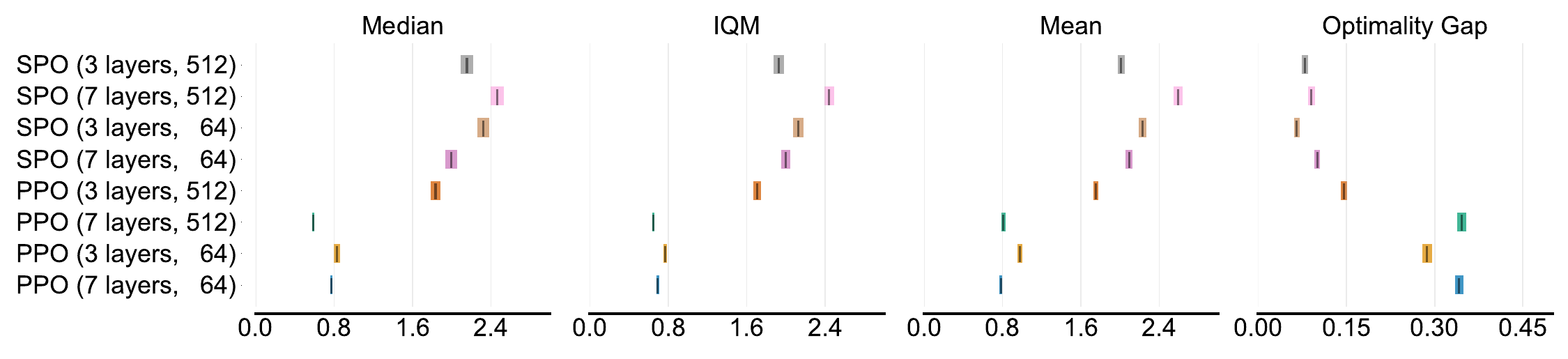}
	\caption{Aggregate metrics on MuJoCo-v4 with 95\% CIs based on 6 environments, comparing PPO and SPO with different policy network layers and mini-batch sizes using PPO-normalized score.}\label{layers and mini-batch size}
\end{figure*}

\begin{figure}[!h]
	\centering
	\includegraphics[scale=0.5]{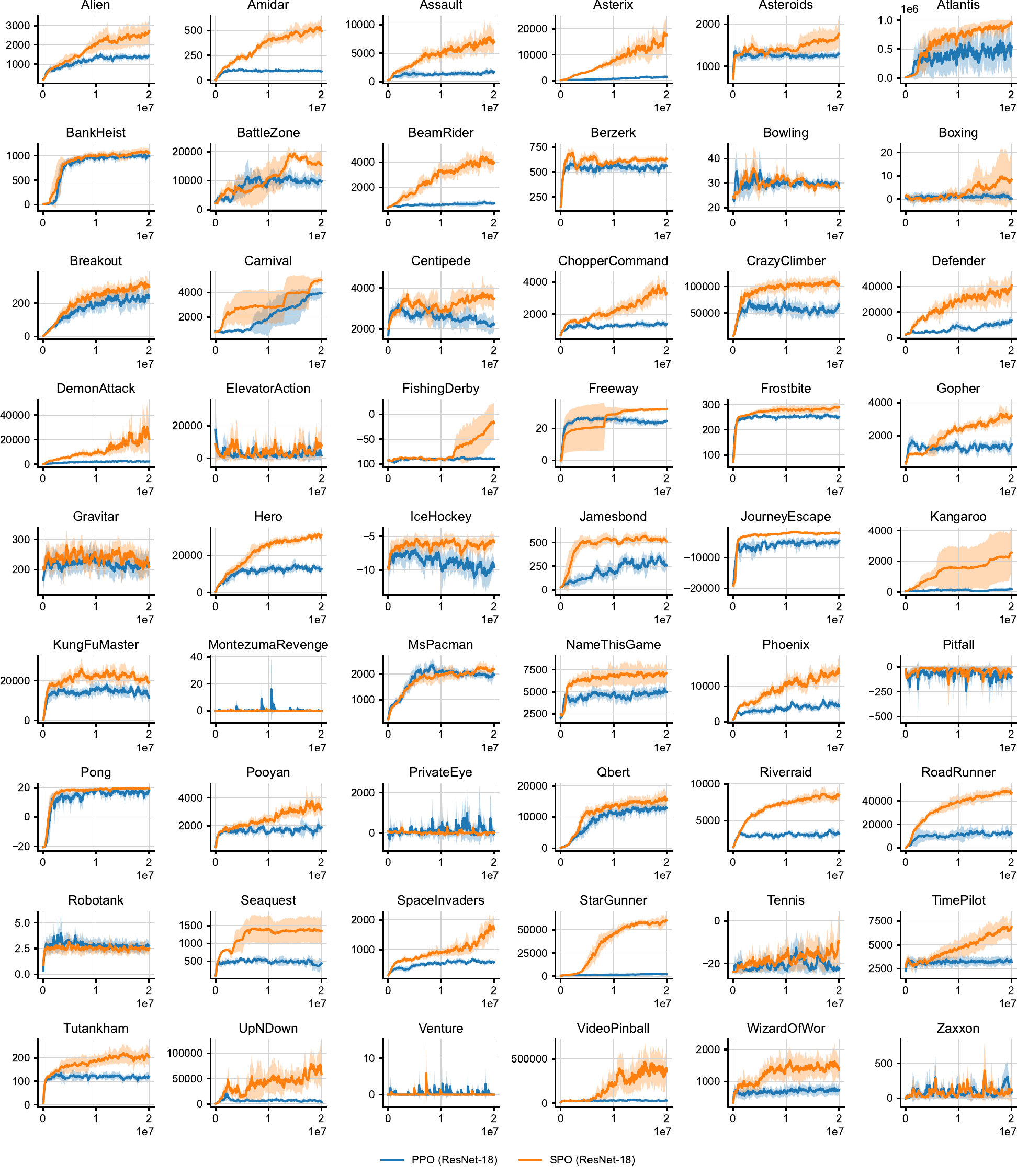}
	\caption{Training performance on Atari 2600 using ResNet-18 as the encoder, with a fixed learning rate of 0.0001. The mean and standard deviation are shown across 3 random seeds.}\label{atari result}
\end{figure}

\end{document}